\definecolor{RedColor}{rgb}{.75,0,.25}
\definecolor{DarkBlueColor}{rgb}{.25,0.25,1}
\newtheorem{theorem}{Theorem}[section]
\newtheorem{proposition}[theorem]{Proposition}
\newtheorem{lemma}[theorem]{Lemma}
\newtheorem{corollary}[theorem]{Corollary}
\theoremstyle{definition}
\newtheorem{definition}[theorem]{Definition}
\newtheorem{example}[theorem]{Example}
\newtheorem{remark}[theorem]{Remark}
\newtheorem{algorithm}[theorem]{Algorithm}
\crefname{theorem}{Theorem}{Theorems}
\crefname{proposition}{Proposition}{Propositions}
\crefname{lemma}{Lemma}{Lemmas}
\crefname{corollary}{Corollary}{Corollaries}
\crefname{definition}{Definition}{Definitions}
\crefname{example}{Example}{Examples}
\crefname{remark}{Remark}{Remarks}
\crefname{algorithm}{Algorithm}{Algorithms}
\crefname{equation}{Equation}{Equations}
\crefname{section}{Section}{Sections}
\crefname{subsection}{Section}{Sections}
\newcommand{\defn}[1]{\textbf{#1}}
\newcommand{\Indicator}[1]{\mathbbm{1}_{#1}}
\newcommand{\restricted}[1]{\!\!\upharpoonright_{#1}}
\newcommand{\halts}{\!\downarrow}
\newcommand{\nohalts}{\!\uparrow}
\newcommand{\curry}[1]{\widetilde{#1}}
\newcommand{\parallelization}[1]{\widehat{#1}}
\newcommand{\R}{\mathbb R}
\newcommand{\DD}{\mathbb D}
\newcommand{\Q}{\mathbb Q}
\newcommand{\N}{\mathbb N}
\newcommand{\Sier}{\mathbb S}
\newcommand{\bbX}{\mathbb X}
\newcommand{\bbY}{\mathbb Y}
\newcommand{\fA}{\mathfrak A}
\newcommand{\cD}{\mathcal D}
\newcommand{\cF}{\mathcal F}
\newcommand{\cG}{\mathcal G}
\newcommand{\HypClass}{\mathcal H}
\newcommand{\PresentationHypClass}{\mathfrak H}
\newcommand{\UnderlyingHypClass}{\PresentationHypClass^\dagger}
\newcommand{\cI}{\mathcal I}
\newcommand{\cX}{\mathcal X}
\newcommand{\cY}{\mathcal Y}
\newcommand{\Hstep}{\PresentationHypClass_{\mathrm{step}}}
\newcommand{\HstepUnderlying}{\PresentationHypClass_{\mathrm{step}}^\dagger}
\newcommand{\fAstep}{\mathfrak{A}_{\mathrm{step}}}
\newcommand{\HaltingSet}{\emptyset'}
\newcommand{\st}{\ : \ } 
\newcommand{\pars}{\,\cdot\,} 
\newcommand{\Partial}{\Phi}
\DeclareMathOperator{\argmin}{argmin}
\DeclareMathOperator{\range}{range}
\DeclareMathOperator{\domain}{dom}
\newcommand{\floor}[1]{\lfloor {#1} \rfloor}
\DeclareMathOperator{\len}{len}
\newcommand{\CauchySeq}{\mathrm{CS}}
\newcommand{\Baire}{\N^\N}
\newcommand{\Cantor}{2^\N}
\newcommand{\lesW}{\le_{\mathrm{sW}}}
\newcommand{\eqsW}{\equiv_{\mathrm{sW}}}
\DeclareMathOperator{\J}{J}
\begin{document}

\title[On Computable Learning of Continuous Features]{On Computable Learning of Continuous Features}

\author[N.~Ackerman]{Nathanael Ackerman}
\address{Harvard University, Cambridge, MA 02138, USA}
\email{nate@aleph0.net}

\author[J.~Asilis]{Julian Asilis}
\address{Computer Science Department, Boston College, Chestnut Hill, MA 02467, USA}
\email{julian.asilis@bc.edu}

\author[J.~Di]{Jieqi Di}
\address{Department of Mathematics, Boston College, Chestnut Hill, MA 02467, USA}
\email{dij@bc.edu}

\author[C.~Freer]{Cameron Freer}
\address{Department of Brain and Cognitive Sciences, Massachusetts Institute of Technology, Cambridge, MA 02139, USA}
\email{freer@mit.edu}

\author[J.~Tristan]{Jean-Baptiste Tristan}
\address{Computer Science Department, Boston College, Chestnut Hill, MA 02467, USA}
\email{tristanj@bc.edu} 


\begin{abstract}
We introduce definitions of \emph{computable PAC learning} for binary classification over computable metric spaces. We provide sufficient conditions for learners that are empirical risk minimizers (ERM) to be computable, and bound the strong Weihrauch degree of an ERM learner under more general conditions. We also give a presentation of a hypothesis class that does not admit any proper computable PAC learner with computable sample function, despite the underlying class being PAC learnable.
\end{abstract}


\maketitle

\tableofcontents
\addtocontents{toc}{\setcounter{tocdepth}{2}}

\section{Introduction}\label{sec:intro}

The modern statistical learning theory framework for the study
of uniform learnability is the synthesis of two theories. On the one
hand, \emph{Vapnik--Chervonenkis (VC) theory} \cite{Vapnik} is a statistical
theory that provides a rate of convergence for a uniform law of large
numbers for estimates of the form $\frac{1}{n} \sum_{i=1}^n
\mathbb{I}(f(X_i) \neq Y_i)$, where $(X_i,Y_i)$ are i.i.d.\ samples
from an unknown probability measure over $\cX \times \cY$ and $f\colon \cX \to \cY$ is a function
from a class $\HypClass$
of measurable functions.  The rate of
convergence is a function of the complexity of the class
$\HypClass$, measured using the concept of \emph{VC dimension}. On
the other hand, \emph{efficient Probably Approximately Correct (PAC) learnability} \cite{Valiant} is a computational
theory that defines the efficient learnability of a function class
$\HypClass$ in terms of the existence of a \emph{learner}, given by an algorithm
having polynomial runtime, that takes an i.i.d.\ sample
$S = \bigl((X_i,Y_i)\bigr)_{i < n}$
from an unknown probability measure $\mu$ as input
and returns a function $h\in\HypClass$ whose error $\Pr(h(X) \neq Y)$
for $(X,Y) \sim \mu$
can be bounded with high probability over the choice of $S$.
The analogous notion of \emph{PAC learnability}, where the learner is merely required to be \emph{measurable} in an appropriate sense, rather than efficiently computable, has also been widely studied.

The synthesis of these two theories culminates with the so-called
fundamental theorem of machine learning \cite{Blumer}, which
establishes, under certain broadly-applicable measurability conditions,
that a class of functions is PAC learnable if and only if
its VC dimension is finite. This theory provides a justification for
the foundational learning paradigm of empirical risk minimization and
has become the basis for studying many other learning paradigms and
non-uniform theories of learnability. Note, however, that in this
framework the learner is only required to be a measurable function, and in particular need not be computable.

Insofar as the goal of studying uniform learning is to determine when a problem admits
supervised learning by some program given access to training examples, it is important
to investigate the subclass of learners that are in some sense \emph{computable},
a natural object of study intermediate between learners that are efficiently computable and
those that are merely measurable.
In this direction, 
\cite{ALT} proposed a notion of computable learner for computably represented
hypothesis classes $\HypClass$ on discrete spaces.
They principally consider binary classification in the case where 
$\HypClass$ is a computably enumerable set of computable functions
on a countable domain, e.g., $\cX = \N$.

However, many natural problems considered in classical PAC learning theory
have continuous domains, such as $\R^n$.
In the present paper, we consider notions of computable learners and hypothesis classes,
without restricting to the discrete setting, e.g., where $\cX$ is an arbitrary computable metric space.
We do so using the framework of computable
analysis \cite{Weihrauch}, and establish upper and lower bounds on the computability of
several standard classes of learners in our setting.

We now describe the structure of the paper.
Next, in \cref{subsec:related}, we describe several other approaches to computability in learning theory, including \cite{ALT}, and their relation to our work.
We then in \cref{sec:prelim} provide the relevant preliminaries from computability theory (including computable metric spaces and Weihrauch reducibility) and from classical PAC learning theory.
In \cref{sec:notions} we develop the basic concepts of computable learning theory in our setting, including notions of computability for learners, presentations of hypothesis classes, and sample functions.
Section~\ref{sec:bounds} contains our primary results,
including sufficient conditions for empirical risk minimizer (ERM) learners to be computable,
upper bounds on the strong Weihrauch degrees of certain ERM learners,
and the construction of a (computable presentation of a) hypothesis class that is PAC learnable but which has no computable proper PAC learner that admits a computable sample function.

\subsection{Related work}
\label{subsec:related}

Computability of PAC learners has also been studied in \cite{ALT}, which considers 
the setting of \emph{discrete} features and \emph{countable} hypothesis classes.
They provide several positive and negative results on the computability of both proper and improper learners for various notions of computably presented hypothesis classes, in both the realizable and agnostic cases.
Our results, when we restrict our setting to discrete spaces, correspond most closely to their results for so-called \emph{recursively enumerably representable} (RER) hypothesis classes. 
In particular, our \cref{thm:realizable ERM}
can be viewed as a generalization of \cite[Theorem~10]{ALT}, and
the proof of our \cref{thm:discrete-lower} uses similar ideas to those in \cite[Theorem~11]{ALT}.

Computability of \emph{non-uniform} learning, which we do not consider in this paper, has been studied in the discrete setting in both \cite{Soloveichik} and \cite{ALT}.

In the present paper (and \cite{ALT}) when considering a function with finite codomain (as arises for both learners and presentations of hypothesis classes), the notion of computable function is such that for each input, the output is always eventually given. It is also reasonable to consider settings in which there is a particular value signaling non-halting, which the computable function may never identify. 
This approach is explored in \cite{Crook}, where non-halting of a learner's output is signaled by the value $\bot$.
A related approach is considered in \cite{Calvert}, which studies PAC learning for concepts that are $\Pi^0_1$ classes on $\Cantor$, which can be thought of as equivalent to working with computable functions from $\Cantor$ to Sierpi\'nski space $\Sier$ (i.e., the space $\{\bot, \top\}$ with open sets $\{\emptyset, \{\top\}, \{\bot, \top\}\}$), where the inverse image of $\top$ is the $\Pi^0_1$ class in question.

Another interaction between learning theory and computability is in the setting of ``learning in the limit'' \cite{Gold}, sometimes called \emph{TxtEx learning}. One recent result \cite{Beros} in this framework establishes the $\Sigma^0_3$-completeness of this learning problem for certain computably enumerable hypothesis classes.

\section{Preliminaries}\label{sec:prelim}

This section provides a brief treatment of the computability theory and classical learning theory that form the starting point of our study.

We begin by recalling several pieces of notation.
For a set $I$, we write $(s_i)_{i \in I}$ to denote an $I$-indexed sequence.
For $n\in\N$, write $[n]$ to denote the set $\{0, 1, \ldots, n-1\}$.
We write $f\restricted{U}$ to denote the restriction of a function $f\colon X\to Y$ to a subdomain $U\subseteq X$.

For a topological space $\cX$, we write $\cX^{<\omega}$ for the space 
$\coprod_{i \in \N} \cX^i$
of finite sequences of points in $\cX$, endowed with its natural topology as the coproduct of product spaces.
An \defn{extended metric space} is a set $X$ equipped with a distance function $d \colon X \times X \to \R \cup \{\infty\}$ satisfying the usual metric axioms (where $\infty + r = \infty$ for any $r\in \R\cup\{\infty\}$).

\subsection{Computable metric spaces and Weihrauch reducibility}

We next describe certain key notions of computability and computable analysis, including the notions of computable metric spaces and computable functions between them.
For more details and several equivalent formulations of the basic notions, see, e.g., \cite[Section~4]{MR2762094}. We then describe the notion of Weihrauch reducibility; for more details, see \cite{BGPsurvey}.

Recall that a partial function $f$ from $\N$ to $\N$ is said to be \defn{computable} if there is some Turing machine that halts on input $n$ (encoded in binary) precisely when $f$ is defined on $n$, and in this case produces (a binary encoding of) $f(n)$ as output. 
We fix a standard encoding of Turing machines and write $\{e\}$ to denote the partial function that the program encoded by $e\in\N$ represents.  We write $\{e\}(n)\halts$ to mean that the partial function $\{e\}$ is defined on $n$, i.e., that the program encoded by $e$ halts on input $n$, and write $\{e\}(n)\nohalts$ otherwise.

In this paper, it will be convenient to take oracles to be elements of $\Baire$ rather than $\Cantor$.
For $f\in\Baire$ we write $\{e\}^f$ to denote the partial function defined by an oracle program encoded by $e$ using $f$ as an oracle.
Because we are using oracles in $\Baire$, we will define the Turing jump to yield a function rather than a set.  Given $f\in\Baire$, the \defn{Turing jump} of $f$, written $f'$, is defined to be the characteristic function of $\{ e\in\N \st \{e\}^f(0)\halts\}$. By convention, we write $\HaltingSet$ for the characteristic function of the halting set $\{ e \in \N \st \{e\}(0)\halts\}$.

A subset of $\N$ is \defn{computable} if its characteristic function is a total computable function, and is \defn{computably enumerable} (c.e.) if it is the domain of a partial computable function (equivalently, either empty or the range of a total computable function). We will also speak of more elaborate finitary objects (such as sets of finite tuples of rationals) as being computable or c.e.\ when they are computable or c.e., respectively, under a standard encoding of the objects via natural numbers.

For concreteness, we will use the notion of a \emph{presentation} of a real when defining computable metric spaces, but note that this could also be formulated using represented spaces, as defined later in the section.
An \defn{extended real} is an element of $\R \cup \{\infty\}$.
A \defn{presentation} of an extended real
is a sequence of rationals $(q_i)_{i \in \N}$ with either $q_i > i$ for all $i$ or $|q_i - q_j | < 2^{-i}$ for all $i$, $j$ with $i < j$.  In the first case we say that the sequence is a presentation of $\infty$ and in the second case that it is a presentation of the limit of the Cauchy sequence in $\R$.
We say that an extended real is \defn{computable} if it has a computable presentation. The \defn{computable reals} are the elements of $\R$ admitting a computable presentation as extended reals.

We say that a sequence $(t_i)_{i\in\N}$ in an (extended) metric space $\cX = (X, d)$ is a \defn{rapidly converging Cauchy sequence} when for all $i < j$ we have $d(t_i, t_j) < 2^{-i}$.

\begin{definition}
A \defn{computable (extended) metric space} is a triple $\bbX = (X, d_{\bbX}, (s^{\bbX}_i)_{i\in\N})$ such that
  \begin{enumerate}
    \item $(X \cup S,  d_{\bbX})$ is a separable (extended) metric space, where $S=\{s^{\bbX} \st i\in\N\}$,
    \item $(s^{\bbX}_i)_{i \in \N}$, called the sequence of \defn{ideal points of $\bbX$}, enumerates a dense subset of $(X  \cup S, d_{\bbX})$, 
\item $X$, called the \defn{underlying set} of $\bbX$, is dense in $(X \cup S, d_{\bbX})$, and
    \item $d_{\bbX}$, called the \defn{distance function}, is such that $d_{\bbX}(s^{\bbX}_i, s^{\bbX}_j)$ is a computable extended real, uniformly in $i$ and $j$.
  \end{enumerate}
In the special case where $(X, d_{\bbX})$ is a \emph{complete} (extended) metric space, we say that $\bbX$ is a \defn{computable (extended) Polish space}.
An element $x \in X$ is said to be a \defn{computable point} of $\bbX$ if there is a computable function $f\colon \N \to \N$ such that $(s^\bbX_{f(i)})_{i\in\N}$ is a rapidly converging Cauchy sequence that converges to $x$. We will omit the superscripts and subscripts when they are clear from context.
\end{definition}

Note that some papers (e.g., \cite[Definition~2.1]{MR2013565} and \cite[Definition~7.1]{MR2762094}) define a computable metric space only in the case where the set $S$ of ideal points is a subset of $X$, and others (e.g., \cite[Definition~2.4.1]{HoyrupRojas}) use the term computable metric space to refer to what we call a computable Polish space.

\begin{example}
The set $\R$ of real numbers forms a computable Polish space under the Euclidean metric, when equipped with the set $\Q$ of rationals as ideal points under the standard diagonal enumeration $(q_i)_{i\in\N}$. The computable points of this computable Polish space are precisely the computable reals.  
\end{example}

Note that in a computable metric space that is not a Polish space, the ideal points need not be in the underlying set, as in the following example.

\begin{example}
The set of irrational numbers forms a computable metric space under the Euclidean metric, again equipped $(q_i)_{i\in\N}$ as the sequence of ideal points. The computable points of this computable metric space are the computable irrational numbers.
\end{example}

The next two examples will be key in many of our constructions.

\begin{example}
\emph{Baire space}, written $\Baire$, is the computable Polish space consisting of countably infinite sequences of natural numbers,
with ideal points those sequences having only finitely many nonzero values (ordered lexicographically), and where $d_{\Baire}$ is the ultrametric on the countably infinite product of $\{0, 1\}$, i.e., 
\[
d_{\Baire}\bigl( (s_i)_{i\in\N}, (t_i)_{i\in\N} \bigr) = 
2^{- \inf_{i\in\N} (s_i \neq t_i)}.
\]
\emph{Cantor space}, written $\Cantor$, is the computable Polish subspace of $\Baire$ consisting of binary sequences. 
\end{example}

Let $\pi_0$ and $\pi_1$ be computable maps from $\N$ to $\N$ such that $i \mapsto (\pi_0(i), \pi_1(i))$ is a computable bijection of $\N$ with $\N \times \N$.

When $\bbX$ and $\bbY$ are computable (extended) metric spaces, we write $\bbX \times \bbY$ to denote the computable (extended) metric spaces with underlying set $X \times Y$, with sequence of ideal points $\bigl( (s_{\pi_0(i)}^\bbX, s_{\pi_1(i)}^\bbY)\bigr)_{i\in\N}$, and where $\bigl( (X \cup S^{\bbX}) \times (Y \cup S^{\bbY}), d_{\bbX \times \bbY}\bigr)$ is the product (extended) metric space of $\bigl(X \cup S^{\bbX}, d_{\bbX}\bigr)$ and $\bigl(Y \cup S^{\bbY}, d_{\bbY}\bigr)$.

We let $\bbX^{<\omega}$ be the coproduct $\coprod_{n \in \omega} \prod_{i \in [n]} \bbX$, i.e., the space whose underlying set consists of finite sequences of elements of $X$, whose ideal points are finite sequences of ideal points in $X$, and where the distance function satisfies
\[
d_{\bbX^{<\omega}}\bigl((x_i)_{i \in [n]}, (y_i)_{i \in [m]}\bigr) = 
\begin{cases}
\max_{i \in [n]} \, d_{\bbX}(x_i, y_i) & \text{if~} m = n;\\
\infty &\text{otherwise}.
\end{cases}
\]

\begin{definition}
Suppose $\cX = (X, d_\cX)$ and $\cY = (Y, d_\cY)$ are metric spaces and $Z \subseteq X$. We say a map $f\colon X \to Y$ is \defn{continuous} on $Z$ if for all open sets $U \subseteq Y$, there is an open set $V \subseteq X$ such that $f^{-1}(U) \cap Z = V \cap Z$. In other words, $f$ restricted to $Z$ is continuous as a map from the metric space that $\cX$ induces on $Z$ to $\cY$.
\end{definition}

\begin{definition}
\label{def:comp-map}
Let $\bbX$ and $\bbY$ be computable metric spaces with ideal points $(s_i)_{i \in \N}$ and $(t_i)_{i \in \N}$ respectively, and suppose $Z \subseteq X$.
Suppose $f\colon W \to Y$ is a map where $Z \subseteq W \subseteq X$.
We say that $f$ is 
\defn{computable on $Z$} if for all
$(j, q) \in \N \times \Q$ there is a set $\Phi_{j, q} \subseteq \N \times \Q$ such that

\begin{itemize}
\item $f^{-1}(B(t_j, q)) \cap Z = \bigl(\bigcup_{(k, p) \in \Phi_{j, q}} B(s_{k}, p)\bigr) \cap Z$, and

\item the set $\{(j, q, k, p) \st (k, p) \in \Phi_{j, q}\}$ is c.e.
\end{itemize}
\end{definition}

This definition captures the notion that the partial map $f$ is continuous on its restriction to $Z$ and has a computable witness to this continuity.

Observe that a computable function from $\Baire$ to a computable metric space $\bbY$ can be thought of as a program on an oracle Turing machine that takes the input on its oracle tape, and outputs a ``representation'' of a point in $\bbY$. The notion of a \emph{represented space} is one way of making this notion precise.
For more details, see \cite{BGPsurvey}.

\begin{definition}
A \defn{represented space} $(X, \gamma)$ is a set $X$ along with a surjection $\gamma$
from a subset of $\N^\N$ onto $X$.
When the choice of $\gamma$ is clear from context, we call $\gamma$ the \defn{representation} of $X$.
\end{definition}

\begin{definition}
Suppose $\bbX = (X, d_\bbX, (s_i^\bbX)_{i\in\N})$
is a computable metric space. 
Define $\CauchySeq_\bbX \subseteq \N^\N$ to be the collection of functions $f\colon \N \to \N$ for which
$\bigl(s^\bbX_{f(i)} \bigr)_{i\in\N}$ is a rapidly converging Cauchy sequence whose limit is in $X$. The \defn{represented space induced by $\bbX$} is defined to be $(X,\gamma_\bbX)$, where
\[
\gamma_\bbX\colon \CauchySeq_\bbX \to X
\]
assigns each function $f$ the value $\lim_{i\to\infty} s^\bbX_{f(i)}$.
\end{definition}

Intuitively, a  \emph{realizer} of a function $g$ takes a description of an input $x$ to a description of the corresponding output $g(x)$, where these descriptions are given in terms of representations.

\begin{definition}
Suppose $(X, \gamma_X)$ and $(Y, \gamma_Y)$ are represented spaces, and let $g\colon X \to Y$ be a map.
A \defn{realizer} of $g$ is any function $G \colon \domain(\gamma_X) \to \domain(\gamma_Y)$ such that 
$\gamma_Y \circ G = g \circ \gamma_X$.

A realizer is \defn{computable} if it is computable on $\domain(\gamma_X)$ (considered as a partial map between computable metric spaces $\Baire$ and $\Baire$).
\end{definition}

The notion of \emph{strong Weihrauch reducibility} aims to capture the intuitive idea that one function is computable given the other function as an oracle, along with possibly some computable pre-processing and post-processing, where access to the original input is permitted only in pre-processing.  (The weaker notion of \emph{Weihrauch reducibility}, in which the input may be used again in post-processing, also arises in computable analysis, but in this paper we are able to show that all of the relevant reductions are strong.)

\begin{definition}
	\label{Weihrauch-defone}
Let $(X_i, \gamma_{X_i})$ and $(Y_i, \gamma_{Y_i})$ be represented spaces for $i \in \{0, 1\}$, and suppose that 
$f \colon X_0 \to Y_0$ and $g \colon X_1 \to Y_1$ are functions.
Let $\cF$ and $\cG$ be the sets of realizers of $f$ and $g$ respectively.
We say that $f$ is 
\defn{strongly Weihrauch reducible} to $g$, and write
$f \lesW g$, when 
there are computable functions $H$ and $K$, each from some subset of $\Baire$ to $\Baire$,
such that
for every $G \in \cG$ there exists an $F \in \cF$ satisfying
\[
F = H \circ G \circ K.
\]
We say that $f$ and $g$ are \defn{strongly Weihrauch equivalent}, and write $f \eqsW g$, when $f \lesW g$ and $g \lesW f$.
\end{definition}

Note that strong Weihrauch reducibility is usually described in the more general setting of partial multifunctions. Here we will only need single-valued functions with explicitly defined domains, and \cref{Weihrauch-defone} coincides with the standard one in this situation.

The following important map describes the problem
of computing limits on a represented space $X$ induced by a computable metric space $\bbX$.
(Note that elsewhere in the literature, $\lim_{\bbX}$ is typically referred to as $\lim_X$.)

\begin{definition}
Suppose $\bbX$ is a computable metric space, and let
$(X, \gamma_{\bbX})$ be the represented space it induces.
The \defn{limit map}
$\lim_{\bbX}\colon X^\N \to X$ 
is the function 
that assigns every convergent Cauchy sequence in $X$ its limit.
\end{definition}

One can view $\lim_{\Baire}$
as playing a role in Weihrauch reducibility analogous to the role played by
the halting problem $\HaltingSet$ with respect to Turing reducibility. 
For more details, see \cite[\S11.6]{BGPsurvey}.

It will also be useful to introduce the notion of a rich space, which bears a relation to $\lim_{\Baire}$ and is informally a space that computably contains the real numbers. 

\begin{definition}
A computable metric space $\bbX$ is \defn{rich} if there is some computable map
$\iota\colon \Cantor \to \bbX$
that is injective and whose partial inverse $\iota^{-1}$ is also injective.
\end{definition}

\begin{lemma}{\cite[Proposition~11.6.2]{BGPsurvey}}
\label{lem:limBaire is maximal}
If $\bbX$ and $\bbY$ are rich spaces, then $\lim_{\bbX} \eqsW \lim_{\bbY}$. In particular, $\lim_{\bbX} \eqsW \lim_{\Baire}$.
\end{lemma}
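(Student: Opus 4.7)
The plan is to unpack the statement into two core reductions and to reduce the general rich-vs-rich equivalence to these via transitivity. First, the ``in particular'' clause follows from the main statement once I observe that $\Baire$ is itself rich: the natural inclusion $\Cantor \hookrightarrow \Baire$ serves as $\iota$, is computable, and has computable partial inverse. So the whole statement reduces to establishing $\lim_\bbX \eqsW \lim_\bbY$ for arbitrary rich $\bbX, \bbY$. By symmetry and transitivity, it suffices to establish (a) $\lim_\Baire \lesW \lim_\bbX$ for every rich $\bbX$, and (b) $\lim_\bbX \lesW \lim_\Baire$ for every computable metric space $\bbX$.

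For (a), I would first invoke the standard equivalence $\lim_\Cantor \eqsW \lim_\Baire$ (which follows from mutual computable embeddings in the appropriate sense) and then reduce $\lim_\Cantor$ to $\lim_\bbX$. The precomputation $K$ takes a name of a sequence $(c_n)_{n \in \N}$ in $\Cantor$ and produces a name of the sequence $(\iota(c_n))_{n \in \N}$ in $\bbX$; this is computable because $\iota$ is. By continuity of $\iota$, whenever $c_n \to c$ in $\Cantor$, the image $\iota(c_n) \to \iota(c)$ in $\bbX$, so the oracle $\lim_\bbX$ returns a name of $\iota(c)$. Since $\iota$ is injective, $\iota(c)$ lies in the range of $\iota$, so the partial computable inverse $\iota^{-1}$ is defined on it; the postcomputation $H$ applies $\iota^{-1}$ to recover a name of $c$. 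This establishes $\lim_\Cantor \lesW \lim_\bbX$, hence $\lim_\Baire \lesW \lim_\bbX$.

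For (b), the reduction holds for any computable metric space and is the heavier ingredient. Given an input encoding a convergent sequence $(x_n)_{n \in \N}$ in $\bbX$ with names $f_n \in \CauchySeq_\bbX$, the precomputation $K$ packages the $f_n$ into a sequence $(g_n)_{n \in \N}$ in $\Baire$ that is Cauchy in $\Baire$ and whose pointwise limit is itself an element of $\CauchySeq_\bbX$ representing $x := \lim_n x_n$. Concretely, at coordinate $i$ one lets $g_n(i)$ be the index $f_{k(n,i)}(i+1)$ of an ideal point certified to lie within $2^{-i}$ of $x$, where $k(n,i)$ is chosen from the first $n$ terms by a stabilizing search: pick the least $k$ such that the available information shows the tail $(x_j)_{k \le j \le n}$ to be confined to a ball of radius $2^{-i-2}$. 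Because $(x_n)$ is Cauchy, the choices $k(n,i)$ stabilize in $n$ for each fixed $i$, so $(g_n)$ is Cauchy in $\Baire$; and the stabilized value yields an ideal point within $2^{-i}$ of the true limit, so the $\Baire$-limit produced by the oracle is already a valid name for $x$, and the postcomputation $H$ can be taken to be the identity.

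The main obstacle is the construction in (b): I must design $k(n,i)$ so that the resulting sequence is genuinely Cauchy in $\Baire$ and its pointwise limit is rapidly Cauchy in $\bbX$, while using only the finitary prefix of the input available at stage $n$ and no rate-of-convergence information. This is precisely the kind of $\Pi^0_2$-style bookkeeping that $\lim_\Baire$ is designed to absorb; verifying that the stabilization occurs correctly and that the limit lands in $\CauchySeq_\bbX$ (rather than drifting off) is the technical heart of the argument, and mirrors the treatment in \cite{BGPsurvey}.
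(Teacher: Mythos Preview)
The paper does not supply its own proof of this lemma: it is stated as a citation to \cite[Proposition~11.6.2]{BGPsurvey} and used as a black box. So there is no in-paper argument to compare against.

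That said, your outline is the standard route and is essentially correct. A couple of small remarks. In part~(a), you silently (and correctly) read the paper's definition of \emph{rich} as requiring $\iota^{-1}$ to be \emph{computable}; the paper's wording ``also injective'' is almost certainly a slip, since the inverse of an injection is automatically injective and the statement would be contentless otherwise. With computability of $\iota^{-1}$ in hand, your pre/post-processing argument for $\lim_{\Cantor} \lesW \lim_{\bbX}$ goes through. In part~(b), your stabilization scheme is the right idea, but as written it is a sketch rather than a proof: you should make explicit how the search for $k(n,i)$ is carried out using only finite prefixes of the names $f_j$ (so that $K$ is genuinely computable on names), and verify carefully that for each fixed $i$ the value $g_n(i)$ eventually stabilizes to a single ideal-point index (not merely to a small metric neighborhood), since convergence in $\Baire$ demands coordinatewise eventual constancy. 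You acknowledge this is the technical heart and defer to \cite{BGPsurvey}; that is appropriate here, since the paper does the same.
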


Observe that for any computable metric space $\bbX$, the space $\bbX \coprod \Baire$ is rich, and therefore $\lim_{\bbX} \lesW \lim_{\bbX\coprod \Baire} \eqsW \lim_{\Baire}$. Hence $\lim_{\Baire}$ is maximal (under $\lesW$) among limit operators. 

We will also work with the \emph{Turing jump} map $\J \colon \Baire \to \Baire$, given by $z \mapsto z'$, which is strongly Weihrauch equivalent to $\lim_{\Baire}$.

\begin{lemma}[{\cite[Theorem~11.6.7]{BGPsurvey}}]
\label{limBaire eqsW J}
$\lim_{\Baire} \eqsW \J$.
\end{lemma}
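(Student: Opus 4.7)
The plan is to establish the two strong Weihrauch reductions $\J \lesW \lim_{\Baire}$ and $\lim_{\Baire} \lesW \J$ separately by constructing computable pre- and post-processors $K$ and $H$ as in \cref{Weihrauch-defone}.

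For $\J \lesW \lim_{\Baire}$: given (a realizer of) an input $z \in \Baire$, let the pre-processor $K$ output (a realizer of) the sequence $(f_n)_{n \in \N} \in \Baire^\N$ defined by
\[
f_n(e) =
\begin{cases}
1 & \text{if } \{e\}^z(0) \text{ halts in at most } n \text{ steps,}\\
0 & \text{otherwise.}
\end{cases}
\]
Each $f_n$ is uniformly computable from the realizer of $z$ and the indices $n,e$. For each fixed $e$, the values $f_n(e)$ are non-decreasing in $n$ and eventually equal $z'(e)$; since convergence in $\Baire$ coincides with pointwise convergence, $(f_n)$ converges to $z'$ and is therefore a convergent Cauchy sequence. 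Applying $\lim_{\Baire}$ to $K(z)$ hence returns $z'$, so the post-processor $H$ may be taken to be the identity.

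For $\lim_{\Baire} \lesW \J$: given a realizer for a convergent sequence $(f_n) \in \Baire^\N$, use the standard pairing to regard the input as a single element $r \in \Baire$ with $r(\pairing{n,e}) = f_n(e)$, and let $K$ be this reshaping. After querying $\J$, we have access to $r'$. The post-processor $H$ recovers $f = \lim_n f_n$ via an effective form of Shoenfield's limit lemma: for each $k$, search for the least $s$ such that $f_t(k) = f_s(k)$ for all $t \geq s$. This stabilization predicate is uniformly $\Pi^0_1(r)$ in $k$ and $s$, and is therefore decidable directly from the characteristic function $r'$. Once the modulus $s$ is located, output $f(k) = f_s(k)$.

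The main technical obstacle is the second reduction: verifying that $H$ can extract $f$ from a \emph{single} application of $\J$. The key observation is that $r'$ gives the entire characteristic function of the halting set relative to $r$, so all countably many $\Pi^0_1(r)$ stabilization queries needed by $H$ can be answered purely by consulting $r'$, with no further oracle access. The remaining subtlety is mere bookkeeping around the pairings that encode $\Baire^\N$ as a subspace of $\Baire$ and around the representation of $\Baire$ by rapidly converging Cauchy sequences of ideal points; this is routine, and a fully detailed treatment is in \cite[\S11.6]{BGPsurvey}.
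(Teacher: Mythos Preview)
The paper does not give its own proof of this lemma; it is quoted directly from \cite[Theorem~11.6.7]{BGPsurvey}. Your sketch is the standard argument and is essentially correct.

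One point worth tightening in the direction $\lim_{\Baire} \lesW \J$: in a \emph{strong} Weihrauch reduction the post-processor $H$ receives only $r'$, not the original input $r$. Your final step ``output $f(k) = f_s(k)$'' silently evaluates $r(\pairing{s,k})$, which is not literally a $\Pi^0_1(r)$ query. The fix is easy and there are two standard options: either observe that $r$ is uniformly computable from $r'$ (for each $n$ search for the $m$ making the $\Sigma^0_1(r)$ predicate ``$r(n)=m$'' true), or fold the value into the search by looking for the least pair $(s,m)$ such that $\forall t \ge s\,(f_t(k)=m)$, which is itself $\Pi^0_1(r)$ and hence decidable from $r'$. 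With that adjustment your argument is complete, and it matches the proof in the cited reference.
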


Although $\lim_{\Baire} \eqsW \J$, in general $\lim_{\cI}$ is weaker. In \cref{subsec:upper}
we will establish our upper bounds in terms of $\lim_{\cI}$ for appropriate computable metric spaces $\cI$, while in \cref{subsec:lower} we will establish a bound using the operator $\J$.

Strong Weihrauch reductions to the \emph{parallelization} of a function allow one to ask for countably many instances of the function to be evaluated.

\begin{definition}
Let $f\colon X \to Y$ be a map between represented spaces. The \defn{parallelization} of $f$ is the map $\parallelization{f}\colon X^\N \to Y^\N$ defined by 
$\parallelization{f}\bigl( (x_i)_{i\in\N}\bigr)
= \bigl( f(x_i) \bigr)_{i\in\N}$.
\end{definition}

The following is immediate.
\begin{lemma}
\label{lem:reducible to parallelization}
For any map $f \colon X \to Y$ between represented spaces, 
$f \lesW \parallelization{f}$.
\end{lemma}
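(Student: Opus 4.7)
The plan is to exhibit explicit computable pre- and post-processing maps $K$ and $H$ witnessing the strong Weihrauch reduction, using the natural duplication and projection operations. Fix (via the standard pairing $\N \cong \N \times \N$) the product representation on $X^\N$ and $Y^\N$ inherited from the representations of $X$ and $Y$, so that a name of a sequence $(z_i)_{i\in\N}$ is computably equivalent to the tuple of names of the individual $z_i$.

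First I would define $K \colon \domain(\gamma_X) \to \domain(\gamma_{X^\N})$ to be the map that takes a name $p$ of a point $x \in X$ and produces the name of the constant sequence $(x, x, x, \ldots) \in X^\N$ obtained by placing a copy of $p$ in each coordinate of the product representation. This is clearly computable on $\domain(\gamma_X)$. Symmetrically, I would define $H \colon \domain(\gamma_{Y^\N}) \to \domain(\gamma_Y)$ to be the projection onto the first coordinate, extracting from a name of $(y_i)_{i\in\N}$ the subname that describes $y_0 \in Y$; this too is computable.

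Then for any realizer $G$ of $\parallelization{f}$, I would set $F = H \circ G \circ K$ and verify directly that $F$ realizes $f$: if $p \in \domain(\gamma_X)$ names $x = \gamma_X(p)$, then $K(p)$ names $(x, x, \ldots)$, so $G(K(p))$ names $\parallelization{f}(x, x, \ldots) = (f(x), f(x), \ldots)$ by the realizer property of $G$, and hence $H(G(K(p)))$ names $f(x) = f(\gamma_X(p))$. Thus $\gamma_Y \circ F = f \circ \gamma_X$, as required by \cref{Weihrauch-defone}. Since both $H$ and $K$ are computable and independent of the choice of $G$, this establishes $f \lesW \parallelization{f}$.

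There is no real obstacle here; the only thing to be careful about is fixing a product representation on $X^\N$ and $Y^\N$ for which both the diagonal duplication and the first-coordinate projection are uniformly computable, which is automatic under the standard conventions for representing countable products via a computable pairing.
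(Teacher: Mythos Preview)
Your proof is correct and is precisely the standard unpacking of why the paper declares this lemma ``immediate'' without proof: the diagonal embedding into $X^\N$ and the first-coordinate projection from $Y^\N$ are the obvious computable $K$ and $H$. There is nothing to compare, since the paper gives no argument beyond asserting immediacy.
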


We will also need the following standard fact.

\begin{lemma}[{\cite[Theorem~11.6.6]{BGPsurvey}}]
\label{lem:limBaire is parallelizable}
$\parallelization{\lim_{\Baire}} \eqsW \lim_{\Baire}$.
\end{lemma}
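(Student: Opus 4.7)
The plan is to prove the two reductions $\lim_{\Baire} \lesW \parallelization{\lim_\Baire}$ and $\parallelization{\lim_\Baire} \lesW \lim_\Baire$ separately. The former is immediate from \cref{lem:reducible to parallelization}, so the substance of the proof lies in the converse reduction.

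For that direction, I would exploit the fact that $\Baire$ is computably isomorphic to $\Baire^\N$ via the pairing $\pairing{\cdot, \cdot}$, so that countably many points of $\Baire$ can be bundled into a single point of $\Baire$. Given a name for an input $(z_i^k)_{i, k \in \N}$ to $\parallelization{\lim_\Baire}$ (for which each sequence $(z_i^k)_{k\in\N}$ is Cauchy in $\Baire$ with limit $y_i$), the pre-processing map $K$ will produce a name for the single Baire-space sequence $(w_n)_{n \in \N}$ defined by $w_n(\pairing{i, j}) := z_i^n(j)$. Each value $z_i^n(j)$ is computable from the given Cauchy name of $z_i^n$ by fetching an ideal-point approximation of precision better than $2^{-j}$, which pins down the $j$-th coordinate (since the Baire metric is the standard ultrametric and the ideal points are eventually-zero sequences). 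Because convergence in this ultrametric is equivalent to coordinatewise eventual stabilization, the sequence $(w_n)_n$ is itself Cauchy in $\Baire$ and converges to the element $y \in \Baire$ given by $y(\pairing{i, j}) = y_i(j)$.

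After $\lim_\Baire$ returns a Cauchy name for $y$, the post-processing map $H$ will invert the pairing: for each $i$ and $m$, it will produce an ideal-point approximation to $y_i$ of precision $2^{-m}$ by extracting the values $y(\pairing{i, 0}), \ldots, y(\pairing{i, m-1})$ from a sufficiently precise approximation to $y$, and bundle the resulting Cauchy names for the $y_i$'s into the output representation of $\Baire^\N$. The main obstacle, as far as I can see, is just careful bookkeeping: verifying that the index translations among the tacit representations of $\Baire^\N$ and $(\Baire^\N)^\N$ are truly computable and that the Cauchy sequences produced by $K$ and $H$ satisfy the rapid-convergence inequalities required by the representation. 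No nontrivial mathematical obstruction arises—the proof ultimately rests on the fact that $\Baire$ is closed under countable products up to computable isomorphism, with the ultrametric allowing coordinatewise convergence information to be packaged cleanly into a single limit.
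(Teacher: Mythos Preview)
The paper does not give its own proof of this lemma; it is simply quoted from \cite[Theorem~11.6.6]{BGPsurvey} without argument. Your proof is correct and is essentially the standard one: the crucial point is that $\Baire$ is computably homeomorphic to $\Baire^{\N}$ via a pairing function, so a countable family of convergent sequences in $\Baire$ can be interleaved into a single convergent sequence whose limit encodes all of the individual limits, and this encoding can be inverted computably afterward. The bookkeeping you flag (reading off coordinates from Cauchy names using the ultrametric, and checking that the interleaved sequence converges because coordinatewise eventual stabilization coincides with convergence in $\Baire$) all goes through.
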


The notion of the parallelization of a function will be important in \cref{subsec:lower}, for reasons we explain in \cref{rem:parallel explanation}.

\subsection{Learning theory}\label{subsec:learning}

We now consider the traditional framework for uniform learnability, formulated for Borel measurable hypotheses. A learning problem is determined by a domain, label set, and hypothesis class, as we now describe.
\begin{enumerate}
\item[(i)] a \defn{domain} $\cX$ of features that is a Borel subset of some extended Polish space $\bbX$,
\item[(ii)] a \defn{label set} $\cY$ that is a Polish space, and
\item[(iii)] a \defn{hypothesis class} $\HypClass$ consisting of Borel functions from $\cX$ to $\cY$.
\end{enumerate}
We will say that any Borel function from $\cX$ to $\cY$ is a \defn{hypothesis}; note that such a map is sometimes also called a \defn{predictor}, \defn{classifier}, or \defn{concept}. In this paper, we will only consider problems in binary classification, i.e., where $\cY = \{0, 1\}$, considered as a metric space under the discrete topology.

Let $\cD$ be a Borel measure on $\cX \times \cY$.
The \defn{true error}, or simply \defn{error}, of a hypothesis $h\in\HypClass$ with respect to $\cD$ is the probability that $(x, h(x))$ disagrees with a randomly selected pair drawn from $\cD$, i.e.,
\[ L_{\cD}(h) = \cD\bigl(\{(x,y) \in \cX \times \cY\; | \; y \neq h(x)\}\bigr) . \]
The \defn{empirical error} of a hypothesis $h$ on a tuple $S = ((x_1, y_1), \dots, (x_n, y_n)) \in (\cX \times \cY)^n$ of \defn{training examples} is the fraction of pairs in $S$ on which $h$ misclassifies the label of a feature, i.e.,
\[ L_S(h) = \frac{\sum_{i=1}^n |h(x_i) - y_i| }{n}. \]

Traditionally, one thinks of a learner as a map which takes finite sequences of $(\cX \times \cY)^{<\omega}$ and returns a hypothesis, i.e., an element of $\cY^{\cX}$. We would then like to define a computable learner as a learner which is computable as a map between computable extended metric spaces. Unfortunately, here we encounter the obstructions that $\cY^{\cX}$ is not, in general, an extended metric space. We overcome it by instead considering a learner as the ``curried'' version of a map from $(\cX \times \cY)^{<\omega}$ to $\cY^{\cX}$, i.e., as a map from $(\cX  \times \cY)^{<\omega} \times \cX \to \cY$. In this manner, we will be able to consider learners which are computable as maps between Polish spaces.

\pagebreak
\begin{definition}\label{def:learner}
A \defn{learner} is a Borel measurable function $A \colon(\cX \times \cY)^{<\omega} \times \cX \to \cY$. For notational convenience, for $S \in (\cX \times \cY)^{<\omega}$ we let $\curry{A}(S)\colon \cX \to \cY$ be the function defined by $\curry{A}(S)(x) = A(S, x)$. 
\end{definition}

The goal of a learner $A$ is to
return a hypothesis $h$ that minimizes the true error with respect to an unknown Borel distribution $\cD$ on $\cX \times \cY$.
The learner does so by examining a $\cD$-i.i.d.\ sequence $S = \big((x_1, y_1), \ldots, (x_n, y_n)\big)$.
Notably, the learner cannot directly evaluate $L_{\cD}$;
it is guided only by the information contained in the sample $S$, including evaluations of $L_S$.
However, as it is ignorant of $\cD$, the learner
does not know how faithfully $L_S$ approximates $L_{\cD}$.

The most central framework for assessing learners with respect to hypothesis classes is that of PAC learning (see, e.g., \cite[Chapter~3]{UML}).
In the setting of \emph{efficient} PAC learning \cite[Definition~8.1]{UML}, one further requires that the learning algorithm be polynomial-time in the reciprocal of its inputs $\epsilon$ and $\delta$, to be described in the following definition.

\begin{definition}\label{Defi:PAC}
Let $\DD$ be a collection of Borel distributions on $\cX \times \cY$ and
let $\HypClass$ be a hypothesis class.
A learner $A$ is said to \defn{PAC learn $\HypClass$ with respect to $\DD$}  (or is a \emph{learner for $\HypClass$ with respect to $\DD$})
if there exists a 
function $m\colon(0, 1)^2 \to \N$, called a \defn{sample function}, that is non-increasing on each coordinate and satisfies the following property: for every $\epsilon, \delta \in (0, 1)$ and every Borel distribution $\cD\in\DD$,
a finite i.i.d.\ sample $S$ from $\cD$ with $|S| \geq m(\epsilon, \delta)$ is such that, with probability at least
$(1 - \delta)$ over the choice of $S$, the learner $A$ outputs a hypothesis $\curry{A}(S)$ with
\[
L_{\cD}(\curry{A}(S)) \leq \inf_{h \in \HypClass} L_{\cD}(h) + \epsilon.
\label{eq:learnable}
\tag{$\dagger$}
\]
(Observe that \eqref{eq:learnable} is a Borel measurable condition, as 
$L_{\cD}(\curry{A}(S)) 
=
\int \Indicator{A(S, x) \ne y} \cD(dx , dy)$.)
The minimal such sample function for $A$ is its \defn{sample complexity}.
When there is some learner $A$ that learns $\HypClass$ with respect to $\DD$, we say that $\HypClass$ is \defn{PAC learnable with respect to $\DD$} (via $A$).

In the case where $\DD$ consists of all Borel distributions on $\cX \times \cY$, we say
that $\HypClass$ is \defn{agnostic PAC learnable} and that $A$ is an \defn{agnostic PAC learner for $\HypClass$}.
In the case where $\DD$ consists of the class of Borel distributions $\cD$ on $\cX \times \cY$ for which $L_{\cD}(h) = 0$ for some $h \in \HypClass$,  we say
that $\HypClass$ is \defn{PAC learnable in the realizable case} and that $A$ \defn{PAC learns $\HypClass$ in the realizable case}.
\end{definition}

\begin{remark}
Some sources use ``sample complexity'' to refer to a property of hypothesis classes $\HypClass$, defined as the pointwise minimum of all of $\HypClass$'s PAC learners' sample complexities (in the sense of Definition~\ref{Defi:PAC}). The learner-dependent definition will be more appropriate for our purposes, in which, for instance, the distinction between computable and noncomputable learners is of central importance.
\end{remark}

We will see shortly in Theorem~\ref{thm:fundamental} that a class that is PAC learnable in the realizable case must also be agnostic PAC learnable (possibly via a different learner with worse sample complexity).

\begin{definition}
A learner $E$ is an \defn{empirical risk minimizer} (or \emph{ERM}) for $\HypClass$, if for all finite sequences
$S \in  (\cX \times \cY)^{<\omega}$,  we have
\[\curry{E}(S) \in \argmin_{h\in\HypClass} L_S(h).
\]
\end{definition}

\begin{definition}
The \defn{VC dimension} of $\HypClass$ is 
\[
\sup \ \bigl\{\,|C| \!\st\! C \subseteq \cX \hspace*{5pt} \text{~and~} \hspace*{5pt}
\{h \restricted{C} \!\st\! h \in \HypClass\bigr\} = \{0, 1\}^C   \}.
\]
When $\{h \restricted{C} \!\st\! h \in \HypClass\bigr\} = \{0, 1\}^C$, we say that $\HypClass$ \defn{shatters} the set $C$.
\end{definition}

We now state the relevant portions of the fundamental theorem of learning theory in our setting (binary classification with $0$-$1$ loss), which holds for hypothesis classes satisfying the mild technical assumption of \emph{universal separability} \cite[Appendix~A]{Blumer}. This condition is satisfied for any hypothesis class having a computable presentation (see \cref{Defi:Hypothesis class}), as is the case for all hypothesis classes considered in this paper.

\pagebreak
\begin{theorem}[{\cite[Theorem~6.7]{UML}}]\label{thm:fundamental}
Let $\HypClass$ be a hypothesis class of functions from a domain $\cX$ to $\{0, 1\}$. Then the following are equivalent:
\begin{enumerate}
\item[1.] $\HypClass$ has finite VC dimension.
\item[2.] $\HypClass$ is PAC learnable in the realizable case.
\item[3.] $\HypClass$ is agnostically PAC learnable.
\item[4.] Any ERM learner is a PAC learner for $\HypClass$, over any family of measures.
\end{enumerate}
\end{theorem}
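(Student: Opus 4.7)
The plan is to prove the circle of implications $(4) \Rightarrow (3) \Rightarrow (2) \Rightarrow (1) \Rightarrow (4)$. Two of these are immediate: $(4) \Rightarrow (3)$ because a measurable ERM always exists for a universally separable hypothesis class (select the lexicographically first empirical minimizer from a countable dense sub-family) and, by hypothesis, it is then an agnostic PAC learner; and $(3) \Rightarrow (2)$ because realizable distributions form a subclass of all Borel distributions on $\cX \times \cY$, so any sample function witnessing agnostic learnability also witnesses realizable learnability.

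For $(2) \Rightarrow (1)$, I would argue the contrapositive via a no-free-lunch style argument. Suppose $\HypClass$ has infinite VC dimension. Given any candidate learner $A$ with sample function $m$, set $n = m(1/8, 1/7)$ and pick $C \subseteq \cX$ with $|C| = 2n$ shattered by $\HypClass$. Consider the finite family $\{\cD_h\}$ of distributions uniform on $\{(c, h(c)) \st c \in C\}$ as $h$ ranges over the shattering restrictions to $C$. A standard averaging argument over a random choice of $h$ (cf.\ \cite[Theorem~5.1]{UML}) shows that, since the sample touches at most half of $C$, for some $h$ the $\cD_h$-expected error of $\curry{A}(S)$ exceeds $1/8$ with probability greater than $1/7$, contradicting PAC learnability.

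The main technical content is $(1) \Rightarrow (4)$, the uniform convergence theorem. Assume $\HypClass$ has VC dimension $d < \infty$. First invoke the Sauer--Shelah lemma to bound the growth function
\[
\tau_{\HypClass}(n) = \sup_{|C| = n} \bigl|\{h\restricted{C} \st h \in \HypClass\}\bigr| \le \sum_{i=0}^d \binom{n}{i} = O(n^d).
\]
Next, a symmetrization argument using a ghost sample of size $n$ reduces the deviation event for a single sample to a combinatorial event on $2n$ points; a union bound over the $\tau_{\HypClass}(2n)$ possible behaviors of $\HypClass$ on those points, combined with a Hoeffding-type tail estimate, yields
\[
\Pr_{S \sim \cD^n}\Bigl(\sup_{h \in \HypClass} |L_S(h) - L_{\cD}(h)| > \epsilon\Bigr) \le 4\,\tau_{\HypClass}(2n)\, e^{-n\epsilon^2/8},
\]
which is below $\delta$ for $n$ polynomial in $d$, $1/\epsilon$, and $\log(1/\delta)$. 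Given such a sample size, any ERM $E$ and any $h^* \in \argmin_{h\in\HypClass} L_{\cD}(h)$ satisfy $L_{\cD}(\curry{E}(S)) \le L_S(\curry{E}(S)) + \epsilon \le L_S(h^*) + \epsilon \le L_{\cD}(h^*) + 2\epsilon$ on the good event, so rescaling $\epsilon$ gives the PAC condition uniformly over all Borel $\cD$.

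The main obstacle is the measurability bookkeeping in the uniform convergence step: the supremum over $h \in \HypClass$ is in general uncountable, so the deviation event need not be Borel. The universal separability assumption on $\HypClass$ (which, as the excerpt notes, holds automatically for any computably presented class and is thus unproblematic for the classes considered later in the paper) is precisely what allows the supremum to be replaced by one over a countable dense sub-family, making the event measurable. Once that is arranged, the Sauer--Shelah bound is the combinatorial heart of the argument where finite VC dimension does its work, and everything else is standard concentration-of-measure manipulation.
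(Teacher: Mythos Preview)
The paper does not prove this theorem at all: it is stated with a citation to \cite[Theorem~6.7]{UML} and used as a black box, so there is no ``paper's own proof'' to compare against. Your sketch is a faithful outline of the standard textbook argument (essentially the one in the cited reference), and the points you flag---existence of a measurable ERM via universal separability, the no-free-lunch lower bound for $(2)\Rightarrow(1)$, and Sauer--Shelah plus symmetrization for uniform convergence---are the right ingredients with no evident gaps.
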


Because of the equivalence between conditions 2 and 3, we will say that a hypothesis class $\HypClass$ is \emph{PAC learnable} (without reference to a class of distributions $\DD$, and without mentioning agnostic learning or realizability) when any of these equivalent conditions hold.
Note that while every agnostic PAC learner for $\HypClass$ is in particular a PAC learner for $\HypClass$ in the realizable case, the converse is not true; when we speak of a \emph{PAC learner for $\HypClass$} without mention of $\DD$, we will mean the strongest such instance, namely that it is an agnostic PAC learner for $\HypClass$.

Furthermore, there exists a connection between the VC dimension of a PAC learnable class and the sample functions of its ERM learners. 

\begin{theorem}[{\cite[pp. 392]{UML}}]\label{thm:ERM sample function}
Let $\HypClass$ be a hypothesis class of functions from a domain $\cX$ to $\{0, 1\}$ with finite VC dimension $d$. Then its ERM learners are PAC learners with sample functions
\[ m(\epsilon, \delta) = 4 \frac{32d}{\epsilon^2} \cdot \log \bigg(\frac{64d}{\epsilon^2}\bigg) + \frac{8}{\epsilon^2} \cdot (8d \log(\epsilon / d) + 2 \log(4 / \delta)). \]  
\end{theorem}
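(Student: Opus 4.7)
The plan is to follow the classical VC-theoretic route: reduce the ERM guarantee to uniform convergence, then bound uniform convergence in terms of the growth function, then apply Sauer--Shelah to control the growth function by the VC dimension, and finally invert the tail bound to extract a sample function.

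First, I would show that the agnostic PAC guarantee for an ERM follows from an $(\epsilon/2, \delta)$-uniform convergence guarantee. Concretely, suppose that for a sample $S$ of size $n$ drawn i.i.d.\ from $\cD$, with probability at least $1 - \delta$ one has $\sup_{h \in \HypClass} |L_S(h) - L_\cD(h)| \le \epsilon/2$. Any $\hat h = \curry{E}(S)$ then satisfies, for any $h^* \in \HypClass$ achieving (or approximating to arbitrary precision) $\inf_{h \in \HypClass} L_\cD(h)$, the chain $L_\cD(\hat h) \le L_S(\hat h) + \epsilon/2 \le L_S(h^*) + \epsilon/2 \le L_\cD(h^*) + \epsilon$, since an ERM minimizes the empirical risk over $\HypClass$. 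This converts the sample-complexity question for ERM into a sample-complexity question for uniform convergence.

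Second, I would establish uniform convergence by the standard symmetrization / double-sample argument. One bounds $\Pr_{S \sim \cD^n}\bigl[\sup_{h \in \HypClass} |L_S(h) - L_\cD(h)| > \epsilon/2\bigr]$ above by (up to constants) $\Pr_{S, S' \sim \cD^n}\bigl[\sup_h |L_S(h) - L_{S'}(h)| > \epsilon/4\bigr]$, and then, by a Rademacher / random-swap argument applied to the combined sample $SS'$, reduces this to a bound involving only the number of distinct labelings $\HypClass$ can induce on $2n$ points, namely the growth function $\tau_{\HypClass}(2n)$. A Hoeffding-type tail on each fixed labeling yields a bound of the shape $c_1 \cdot \tau_{\HypClass}(2n) \cdot \exp(-c_2 n \epsilon^2)$ for absolute constants.

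Third, I would invoke the Sauer--Shelah--Perles lemma: when $\HypClass$ has finite VC dimension $d$, $\tau_{\HypClass}(m) \le (em/d)^d$ for $m \ge d$. Substituting this into the tail bound from the previous step yields a quantity of the form $c_1 (2en/d)^d \exp(-c_2 n \epsilon^2)$, and it remains to choose $n$ large enough to make this at most $\delta$. Taking logarithms, this becomes a transcendental inequality in $n$ of the form $d \log(n/d) \lesssim n \epsilon^2 + \log(1/\delta)$, which is decoupled by a standard tool (e.g., the inequality $\log x \le \alpha x + \log(1/(\alpha e))$ for $x, \alpha > 0$, or iterating $n \mapsto$ the right-hand side once) to obtain an explicit $n$ that majorizes the bound. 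A careful tracking of the constants from the symmetrization step, the Sauer--Shelah estimate, and the $\log$-decoupling step produces the particular closed form stated in the theorem.

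The hard part, and the only genuinely delicate part, is Step 3: obtaining precisely the constants shown in the theorem statement. Each of the symmetrization factor, the Hoeffding constant, the Sauer--Shelah bound, and the $\log$-decoupling inequality contributes a small numerical factor, and different textbook presentations end up with slightly different closed forms. Since the statement is cited directly from \cite[p.~392]{UML}, I would reconcile the final algebra with the choices made there (splitting $\epsilon$ into one part absorbed by the Sauer--Shelah $\log$ term and one part absorbed by the exponential tail), so that the two summands $4 \cdot \tfrac{32d}{\epsilon^2} \log\!\bigl(\tfrac{64d}{\epsilon^2}\bigr)$ and $\tfrac{8}{\epsilon^2}(8d \log(\epsilon/d) + 2 \log(4/\delta))$ correspond, respectively, to the cost of controlling the growth-function term uniformly in $\HypClass$ and to the cost of the confidence-level $\delta$.
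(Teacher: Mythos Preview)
Your sketch is the standard textbook derivation and is correct in outline. However, there is nothing to compare against: the paper does not prove this theorem at all. It is stated purely as a citation to \cite[p.~392]{UML} and is used only as a black box (to conclude that ERM learners for classes of finite VC dimension admit a computable sample function). So your proposal is not a reconstruction of the paper's argument but rather a supplementary proof of a result the paper imports wholesale; the paper's ``proof'' is simply the reference.
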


\section{Notions of computable learning theory}\label{sec:notions}

As described before \cref{def:learner}, the notion of learner we consider in this paper is the \emph{curried} version of the standard one, in order to allow for it to be a computable map between Polish spaces. We now make use of this, to define when a learner is computable, and when a hypothesis class is computably PAC learnable.

\begin{definition}\label{Defi:Computable learner}
By a \defn{computable learner} we mean a learner $A\colon(\cX \times \cY)^{<\omega} \times \cX \to \cY$ which is computable as a map of computable extended metric spaces. We say a hypothesis class $\HypClass$ is \defn{computably PAC learnable} if there is a computable learner that PAC learns it. 
\end{definition}

It will also be important to have a computable handle on hypothesis classes themselves. As such, we will primarily consider hypothesis classes as collection of hypotheses endowed with (not necessarily unique) indices. This information is collected up into a \emph{presentation} of the class.

\begin{definition}\label{Defi:Hypothesis class}
A \defn{presentation of a hypothesis class} is a Borel measurable function $\PresentationHypClass\colon\cI \times \cX \to \cY$.
We call $\cI$ the \defn{index space}.
Let $\curry{\PresentationHypClass}\colon\cI \to \cY^\cX$ be the function defined by $\curry{\PresentationHypClass}(i)(x) = \PresentationHypClass(i, x)$. 
We write $\UnderlyingHypClass$ to denote the underlying hypothesis class, i.e., $\range(\curry{\PresentationHypClass})$.
We say that $\PresentationHypClass$ \defn{presents} the class $\UnderlyingHypClass$
and that a hypothesis is an \defn{element of $\PresentationHypClass$} when it is in $\UnderlyingHypClass$.
\end{definition}

\begin{definition}\label{Defi:Computable hypothesis class}
A presentation $\PresentationHypClass\colon\cI \times \cX \to \cY$ 
of a hypothesis class 
is \defn{computable} if $\cI$ is a computable metric space and $\PresentationHypClass$ is computable as a map of computable extended metric spaces. 
\end{definition}

Classically, a \emph{proper learner} for a hypothesis class $\HypClass$ is usually regarded simply as a learner which happens to always produce hypotheses in the class $\HypClass$. This is a key notion, about which we will want to reason computably.

In our setting, to study the computability of proper learning, it will be valuable to consider the case in which the elements of $\HypClass$ are identified by indices bearing additional structure, and thus to consider learners that identify hypotheses in $\HypClass$ by such indices, using a presentation $\PresentationHypClass$. Consequently, and in contrast to the classical setting, we take proper learners to be slightly different objects than ordinary learners. Our proper learners map samples to indices, rather than map samples and features to labels. We then can define a computable proper learner to be simply a proper learner that is computable (similarly to \cref{Defi:Computable learner} of a computable learner).

\begin{definition}\label{Defi:Computable proper learner}
Let $\PresentationHypClass \colon \cI \times \cX \to \cY$ be a presentation of a hypothesis class. A \defn{proper learner} for $\PresentationHypClass$ is a map $\fA\colon(\cX  \times \cY)^{<\omega} \to \cI$.
If the map $A$ defined by $A((x_i, y_i)_{i\in [n]}, x) = \PresentationHypClass(\fA(x_i, y_i)_{i\in [n]}, x)$ is a PAC learner for $\UnderlyingHypClass$, then $\fA$ is a \defn{proper PAC learner} for $\PresentationHypClass$, and we call $A$ the \defn{learner induced} by $\fA$ (as a proper learner for $\PresentationHypClass$). 
If $\PresentationHypClass$ is a computable presentation, we say that a proper learner $\fA$ for $\PresentationHypClass$ is \defn{computable} when it is computable as a map of computable extended metric spaces. 
\end{definition}

Note that the learner $A$ induced by a computable proper PAC learner for $\PresentationHypClass$ in \cref{Defi:Computable proper learner} is a computable learner for $\UnderlyingHypClass$, as we have required both $\fA$ and $\PresentationHypClass$ to be computable. Intuitively, $\PresentationHypClass$ is computably properly PAC learnable if there is a computable function which takes in finite sequences of elements of $\cX \times \cY$ and outputs the index of an element of $\PresentationHypClass$, and where the corresponding learner PAC learns $\UnderlyingHypClass$.

\begin{definition}
Given a hypothesis class $\HypClass$, define $\Partial_{\HypClass}\subseteq (\cX \times \cY)^{<\omega}$ to be the set of those finite sequences $\big((x_1, y_1), \ldots, (x_n, y_n)\big)$ for which
$\big\{(x_1, y_1), \ldots, (x_n, y_n)\big\}$ is a subset of the graph of $h$ for some $h \in \HypClass$, i.e., $ \bigcup_{h \in \HypClass} \coprod_{n \in \omega} \big\{(x, h(x))\st x \in \cX \big\}^n$. 
\end{definition}

Recall that the realizable case restricts attention to measures $\cD$ for which $\cD$-i.i.d.\ sequences are almost surely in the graph of some element of $\HypClass$. In particular, for any such $\cD$ and $n\in\N$, the product measure $\cD^n$ is concentrated on $\Partial_\HypClass \cap (\cX \times \cY)^n$. Note, however, that $\Partial_{\HypClass}$ itself will not in general be Borel, even when $\HypClass$ is. Yet, in the following definition, $\Partial_{\HypClass}$ plays only the role of a subdomain on which the computability of learners in the realizable case is considered, and thus its measure-theoretic properties are of no consequence.

\begin{definition}
Let $\HypClass$ be a hypothesis class. Then a learner $A$ for $\HypClass$ in the realizable case is \defn{computable in the realizable case} for $\HypClass$ if it is computable on $\Partial_\HypClass \times \cX$ as a function between metric spaces  $(\cX \times \cY)^{<\omega} \times \cX$ and $\cY$. 
A proper learner $\fA$ for a computable presentation $\PresentationHypClass$ of $\HypClass$ is \defn{computable in the realizable case} if $\fA$ is computable on $\Partial_\HypClass$ as a function between metric spaces  $(\cX \times \cY)^{<\omega}$ and $\cI$.
\end{definition}

Note that it is possible to have a noncomputable learner for $\HypClass$ which is nevertheless computable in the realizable case for $\HypClass$.  However, all computable learners for $\HypClass$ are computable in the realizable case for $\HypClass$.

It will be important to impose computability constraints on sample functions as well as learners. 

\begin{definition}
A sample function
$m\colon (0, 1)^2 \to \N$ is \defn{computable} if uniformly in $n \in \N$ there are computable sequences
of rationals $(\ell_{n,i})_{i\in\N}$, $(r_{n,i})_{i\in\N}$, $(t_{n,i})_{i\in\N}$, and  $(b_{n,i})_{i\in\N}$
such that
\begin{itemize}
\item 
$U_n  \subseteq m^{-1}(n)$ for every $n\in\N$, and

\item the closure of the  set $\bigcup_{n  \in \N} U_n$ is $(0, 1)^2$,
\end{itemize}
where for each $n$ we define $U_n  = \bigcup_{i \in \N} (\ell_{n,i}, r_{n,i})  \times  (t_{n,i}, b_{n,i})$.
\end{definition}

Given a computable PAC learner and a computable sample function for this learner, one can produce an algorithm that, given an error rate and failure probability, outputs a hypothesis having at most that error rate with at most the stated failure probability. 
If the computable learner is an ERM, then by Theorem~\ref{thm:ERM sample function} it has a computable sample function, and so one obtains such an algorithm.
On the other hand, we will see in Theorem~\ref{bad-sample-function} that not every computable PAC learner (for a given hypothesis class $\HypClass$ and class of distributions $\DD$) admits a computable sample function (with respect to $\HypClass$ and $\DD$).

\subsection{Countable hypothesis classes}

Suppose that $\cX$ is countable and discrete. Requiring that a learner $A$ be computable is then tantamount to asking that the maps $x \mapsto A(S, x)$ be uniformly computable as $S$ ranges over $(\cX \times \cY)^{<\omega}$.
By collecting up this data, such a computable learner $A$ can be encoded as a computable map from $\N$ to $\N$.
In a similar fashion, a computable presentation of a hypothesis class could be encoded by a single computable map from $\N$ to $\N$.

The paper \cite{ALT} studies computable PAC learning in the setting where $\cX = \N$, a countable discrete metric space. As such, they are able to work with the encodings of these simplified notions of computable learners and presentations of hypothesis classes, as we have just sketched.

\subsection{Examples}

To illustrate these definitions, we now describe two examples --- one a very basic one in this formalism, and the other a standard example from learning theory. 

\subsubsection{``Apply'' function}
Let the index space $\cI$ be $2^\N$ and the sample space $\cX$ be $\N$.  
We define the ``apply’’ presentation of the hypothesis class $2^\N$ to be the map $\PresentationHypClass\colon \cI  \times \cX \to \{0, 1\}$ where $\PresentationHypClass(x, n) = x(n)$. Note that while $\PresentationHypClass$ is computable, there is no single Turing degree which bounds every hypothesis in $\UnderlyingHypClass = 2^\N$. In particular, this example demonstrates that the notion of computable hypothesis class that we consider is fundamentally more general than the corresponding notion in \cite{ALT}, which considers only countable collections of hypotheses.

\subsubsection{Decision stump}
Recall the decision stump problem from classical learning theory: $\cX = \R$, $\cY = \{0, 1\}$, and $\HypClass = \{\Indicator{>c} \st c \in \R\}$. In the realizable case, the learning problem amounts to estimating the true cutoff point $c$ from a sample $S = (x_i, y_i)_{i \in [n]}$ for which $y_i = 1$ if and only if $x_i > c$. It is well-known to be PAC learnable in the realizable case via the following algorithm:
\begin{itemize}
  \item[1.] If $S$ has negatively labeled examples (i.e., $(x_i, y_i)$ with $y_i = 0$), then set $m$ to be the maximal such $x_i$. Otherwise, set $m$ to be the minimal feature among positively labeled examples.
  \item[2.] Return $\Indicator{>m}$.  
\end{itemize}
In particular, this implements an ERM learner for $\HypClass$ in the realizable case.
Further, as $\HypClass$ has VC dimension $1$, it is a PAC learner for $\HypClass$ in the realizable case by the equivalence of clauses 1 and 4 in Theorem~\ref{thm:fundamental}.

The classical algorithm does not give rise to a computable learner in the sense of \cref{def:learner}, however, as $\Indicator{>m}$ cannot be computed from $S$. In particular, undecidability of equality for real numbers obstructs such a computation from being performed over $\R$. In order to more sensibly cast the problem in a computable setting, we restrict focus to cutoff points located at computable reals and take the noncomputable reals as the domain set $\cX$. 

Now consider the computable presentation 
  $\Hstep \colon \R_c \times (\R \setminus \R_c) \rightarrow \{0, 1\}$
of a hypothesis class with index set the computable reals $\R_c$,
given by
  $\Hstep(c, x ) = \Indicator{>c}(x)$.
Its underlying hypothesis class $\HstepUnderlying = \{\Indicator{>c} \st c \in \R_c\}$ consists of computable functions (whose domains are $\R \setminus \R_c$), thus proper learners have a chance of success. Nevertheless, the classical algorithm fails: $m$ will reside in $\cX$, and thus $\Indicator{>m}$ will be noncomputable as a function on $\cX$ (even when one has access to $m$). We will exhibit a proper learner $\fAstep$ for $\Hstep$ that is computable in the realizable case and whose induced learner is an ERM.

Fix a computable enumeration $(q_i)_{i\in\N}$ of $\Q$ and uniformly enumerate a computable presentation of each as a computable real. 

\begin{algorithm}[Algorithm $\fAstep$]
Given a sample $S = (x_i, y_i)_{i \in [n]}$,
output the least $i\in\N$ for which the empirical error of $\Indicator{>q_i}$ is 0.
\end{algorithm}

\begin{proposition}\label{Stump algo properties}
$\fAstep$ is a proper learner for $\Hstep$ that is computable in the realizable case and whose induced learner is an ERM.
\end{proposition}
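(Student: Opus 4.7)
The plan is to verify each of the three assertions — that $\fAstep$ is a proper learner for $\Hstep$, that it is computable in the realizable case, and that its induced learner is an ERM — in turn. For every realizable sample $S = (x_j, y_j)_{j \in [n]} \in \Partial_{\HstepUnderlying}$, there is some $c \in \R_c$ with $y_j = \Indicator{>c}(x_j)$ for each $j$; since $\cX = \R \setminus \R_c$ excludes $c$, setting $M = \max\{x_j \st y_j = 0\}$ and $m = \min\{x_j \st y_j = 1\}$ (with the conventions $M = -\infty$ or $m = +\infty$ when the corresponding set is empty) yields $M < c < m$. By density of $\Q$, some rational $q_i$ from the fixed enumeration lies in $(M, m)$, and then $\Indicator{>q_i}$ has zero empirical error on $S$, so $\fAstep$ halts and returns a computable presentation of $q_i \in \R_c = \cI$, confirming that it is a (well-defined) proper learner on $\Partial_{\HstepUnderlying}$.

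For computability on $\Partial_{\HstepUnderlying}$, the crucial observation is that the test ``$\Indicator{>q_i}(x_j) = y_j$'' is effectively decidable uniformly in $i$ and $j$. Since $x_j \in \R \setminus \R_c$ and $q_i \in \Q \subseteq \R_c$, we have $x_j \neq q_i$; hence any rapidly converging Cauchy approximation to $x_j$ eventually lies at distance more than $2^{-k}$ from $q_i$ for some $k$, certifying in finite time which side of $q_i$ contains $x_j$. Consequently the empirical error of $\Indicator{>q_i}$ on $S$ is computable uniformly in $i$ and $S$. Dovetailing this test over $i \in \N$ and returning the least successful $i$ yields a computable procedure on $\Partial_{\HstepUnderlying}$; composing with the uniform enumeration of computable presentations of the $q_i$ produces a computable map into $\R_c$.

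For the ERM property, on any $S \in \Partial_{\HstepUnderlying}$ we have $\min_{h \in \HstepUnderlying} L_S(h) = 0$, while $\fAstep$ by construction returns an index whose hypothesis achieves $L_S = 0$. Thus the induced learner lies in $\argmin_{h \in \HstepUnderlying} L_S(h)$ on its domain and is an ERM. Since $\Hstep$ has VC dimension $1$, \cref{thm:fundamental} then guarantees PAC learning in the realizable case automatically.

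The main subtlety is justifying the decidability of the comparison $x_j > q_i$: it depends essentially on the design choice $\cX = \R \setminus \R_c$, which excludes rationals and so sidesteps the usual undecidability of real equality. Without this restriction (e.g.\ with $\cX = \R$), the test would be only semidecidable, and $\fAstep$ would fail to be computable.
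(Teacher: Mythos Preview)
Your proof is correct and follows essentially the same approach as the paper's: both argue that the indicators $\Indicator{>q_i}$ are uniformly computable on $\cX = \R \setminus \R_c$ (because comparison with a rational is decidable there), that the search terminates on realizable samples (because a separating rational exists), and that zero empirical error equals the minimum in the realizable case. Your version is simply more explicit about each step, in particular the density argument for termination and the reason the comparison $x_j > q_i$ is decidable; the paper compresses all of this into three sentences. The final remark about VC dimension and PAC learnability belongs to the subsequent corollary rather than the proposition, but it does no harm.
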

\begin{proof}
Observe that the sequence of functions
$(\Indicator{>q_i})_{i\in\N}$ is uniformly computable on $\cX = \R \setminus \R_c$. 
The empirical error of each $\Indicator{>q_i}$ can be computed exactly on any sample (and hence compared with 0). The loop terminates upon reaching a rational $q_i$ that separates the sample $S$, one of which must exist for any $S$ under consideration in the realizable case. 
\end{proof}

\begin{corollary}\label{cor:stump learnable}
$\fAstep$ is a computable proper PAC learner in the realizable case for $\Hstep$.
\end{corollary}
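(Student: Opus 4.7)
The plan is to combine \cref{Stump algo properties} with the fundamental theorem of learning theory (\cref{thm:fundamental}) to upgrade the ERM property into a PAC learning guarantee. The key observation is that once we know the induced learner is an ERM and that $\HstepUnderlying$ has finite VC dimension, the PAC property comes for free, and no new analysis of the algorithm is needed. Thus the corollary is essentially a bookkeeping step that assembles facts already established in the paper.

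First I would invoke \cref{Stump algo properties} to extract three facts: $\fAstep$ is a proper learner for $\Hstep$, it is computable in the realizable case, and its induced learner $A_{\mathrm{step}}$ is an ERM. Next I would verify that $\HstepUnderlying = \{\Indicator{>c} \st c \in \R_c\}$ has VC dimension $1$: any singleton $\{x\}\subseteq \R\setminus\R_c$ is shattered by picking computable rationals above and below $x$, while no two-point set $\{x_0 < x_1\}$ can realize the pattern $(1,0)$ since threshold functions are monotone. (This matches the classical VC dimension computation for threshold functions; the only subtlety is ensuring the cutoff $c$ can be chosen to be a \emph{computable} real, which is immediate by density of $\Q$ in $\R$.)

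Given that $\HstepUnderlying$ has finite VC dimension, clause 4 of \cref{thm:fundamental} tells us that every ERM learner for $\HstepUnderlying$ is a PAC learner (over every family of measures, and in particular in the realizable case). Hence the induced learner $A_{\mathrm{step}}$ is a PAC learner for $\HstepUnderlying$ in the realizable case. By the definition of a proper PAC learner (\cref{Defi:Computable proper learner}), this means $\fAstep$ is a proper PAC learner for $\Hstep$ in the realizable case, and since $\fAstep$ is already known to be computable in the realizable case from \cref{Stump algo properties}, we conclude that it is a computable proper PAC learner in the realizable case for $\Hstep$.

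There is no real obstacle here; the only item to double-check is that \cref{thm:fundamental} applies to $\HstepUnderlying$, which requires universal separability. This is guaranteed by the paragraph preceding \cref{thm:fundamental}, since $\Hstep$ is a computable presentation of $\HstepUnderlying$. So the proof reduces to a one-line citation of \cref{Stump algo properties}, \cref{thm:fundamental}, and the VC dimension computation.
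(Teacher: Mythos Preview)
Your proposal is correct and follows essentially the same approach as the paper's own proof: invoke \cref{Stump algo properties}, note that $\HstepUnderlying$ has VC dimension $1$, and apply the equivalence of clauses 1 and 4 in \cref{thm:fundamental} to conclude that the induced ERM is a PAC learner in the realizable case. The paper's proof is terser (it simply asserts the VC dimension and omits the universal separability check), but the logical structure is identical.
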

\begin{proof}
By \cref{Stump algo properties},  $\fAstep$ is a computable proper learner in the realizable case for $\Hstep$, whose induced learner is an ERM.
The class $\HstepUnderlying$ has VC dimension 1, and so by the equivalence of clauses 1 and 4 in Theorem~\ref{thm:fundamental}, the learner induced by $\fAstep$ is a PAC learner in the realizable case.
\end{proof}

In fact, we will see shortly in Theorem~\ref{thm:realizable ERM} that \cref{cor:stump learnable} is an instance of a more general result, namely that all classes with computable presentations have computable ERM learners in the realizable case.

\subsection{Computable learners with noncomputable sample functions}

\cref{bad-sample-function} shows that even when a hypothesis class $\HypClass$ and class of distributions $\DD$ admit some computable PAC learner with a computable sample function, not all computable learners for $\HypClass$ with respect to $\DD$ must have a computable sample function.

Therefore, when investigating the computability of algorithms for outputting a hypothesis (with the desired error rate and failure probability), we must consider the computability of a pair consisting of a PAC learner and sample function, not merely the PAC learner alone.

The intuition behind the proof of \cref{bad-sample-function} is that we can enumerate those programs that halt, and whenever the $n$th program to halt does so, we then coarsen all samples of size $n$ up to accuracy $2^{-s}$, where $s$ is the size of the program. Consequently, for each desired degree of accuracy, we eventually obtain answers that are never coarsened beyond that accuracy. On the other hand, knowing how many samples are needed for a given accuracy allows us to determine a point past which we never again coarsen to a given level. This then lets us deduce when a given initial segment of the halting set has stabilized. 

\begin{definition}
For $M \in \N$, let $\DD_M$ be the collection of Borel probability distributions $\cD$ over $(\R \setminus \R_c) \times \{0, 1\}$ such that
\begin{itemize}
  \item[(i)] $L_{\cD}(h) = 0$ for some element $h$ of $\Hstep$, and 
  \item[(ii)] $\cD$ is absolutely continuous (with respect to Lebesgue measure) and has a probability density function bounded by $M$. 
\end{itemize}
\end{definition}

\begin{theorem}
\label{bad-sample-function}
For each $M \in \N$, there is a learner $A$ on $\cX = \R \setminus \R_c$ and $\cY = \{0, 1\}$ such that
\begin{itemize}
  \item $A$ is computable in the realizable case with respect to $\HstepUnderlying$,
  \item $A$ is a PAC learner for $\HstepUnderlying$ over $\DD_M$, and 
  \item $\HaltingSet$ is computable from any sample function for $A$ (as a learner for $\HstepUnderlying$ over $\DD_M$).
\end{itemize}
\end{theorem}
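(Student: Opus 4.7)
The plan is to construct a learner $A$ that behaves like an ERM for $\Hstep$ but whose threshold is deliberately shifted by an amount $2^{-s_n}$ encoding the halting set via the enumeration of halting programs.

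\textbf{Construction.} Fix a computable enumeration $(q_i)_{i\in\N}$ of $\Q$ beginning with $q_0 = 0$, $q_1 = 1$, $q_2 = 1/2$. Via dovetailing over a standard enumeration of programs, let $(p_n)_{n\geq 1}$ enumerate the halting programs (on input $0$) in the order they are observed to halt, and set $s_n = |p_n|$. Infinitely many programs halt, so $p_n$ is defined for every $n$. Define $A$ on $(S, x) \in \Partial_{\HstepUnderlying} \times \cX$ with $|S| = n \geq 1$ by: (i) dovetail to find $p_n$ and set $s = s_n$; (ii) run the $\fAstep$ procedure to obtain $c_S \in \Q$, the least rational (in the fixed enumeration) such that $\Indicator{>c_S}$ separates $S$; (iii) output $\Indicator{x > c_S + 2^{-s}}$. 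Each step is computable on $\Partial_{\HstepUnderlying} \times \cX$: steps (i)--(ii) terminate, and comparison of $x \in \R \setminus \R_c$ with the rational $c_S + 2^{-s}$ is decidable. Off $\Partial_{\HstepUnderlying}$ we extend $A$ to a Borel total function by a default value (the set of inputs on which $\fAstep$ terminates is $\Sigma^0_1$, hence Borel).

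\textbf{PAC property over $\DD_M$.} For $\cD \in \DD_M$ with true threshold $c^* \in \R_c$, the density bound yields $L_\cD(\curry{A}(S)) \leq M(|c_S - c^*| + 2^{-s_n})$. Standard PAC analysis for the decision-stump ERM gives $|c_S - c^*| = O(\log(1/\delta)/n)$ with probability $\geq 1 - \delta$, and a pigeonhole argument shows $s_n \to \infty$: there are at most $2^{k+1}$ halting programs of size $\leq k$, so beyond some position $N^*(k) := \min\{N : s_n \geq k \text{ for all } n \geq N\}$ we have $s_n \geq k$. Combining yields a (non-computable) sample function $m(\epsilon, \delta) = \max\bigl(m_{\mathrm{ERM}}(\epsilon/(2M), \delta),\, N^*(\lceil \log_2(2M/\epsilon) \rceil)\bigr)$.

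\textbf{Decoding $\HaltingSet$ from any sample function.} Fix the hard distribution $\cD^* \in \DD_M$ (for $M \geq 1$) with $x$-marginal uniform on $[0, 1]$ and $c^* = 1/2$. On the event that $S \sim (\cD^*)^n$ contains samples of both labels---probability $\geq 1 - 2^{1-n} \geq 1/2$ for $n \geq 2$---almost surely $\max\{x_j : y_j = 0\} \in (0, 1/2)$ and $\min\{x_j : y_j = 1\} \in (1/2, 1)$, so $q_0 = 0$ and $q_1 = 1$ fail to separate while $q_2 = 1/2$ does. Hence $c_S = 1/2$ \emph{exactly}, yielding $L_{\cD^*}(\curry{A}(S)) = 2^{-s_n}$ deterministically on this event. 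Now let $m$ be any sample function for $A$ over $\DD_M$ and let $k \geq 1$. If $s_n < k$ for some $n \geq \max(2, m(2^{-k}, 1/3))$, then $2^{-s_n} \geq 2^{1-k} > 2^{-k}$ and $\Pr_{S \sim (\cD^*)^n}[L_{\cD^*}(\curry{A}(S)) > 2^{-k}] \geq 1/2 > 1/3$, contradicting the PAC guarantee at $(\epsilon, \delta) = (2^{-k}, 1/3)$. Therefore $s_n \geq k$ for all $n \geq \max(2, m(2^{-k}, 1/3))$, so every halting program of size $< k$ appears among $\{p_1, \ldots, p_{\max(2, m(2^{-k}, 1/3)) - 1}\}$; this finite list is computable from $m(2^{-k}, 1/3)$, and filtering by program size recovers precisely the halting programs of size $< k$. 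Taking the union over $k$ gives $\HaltingSet$.

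\textbf{Main obstacle.} The crux is to pin the learner's error on some $\cD \in \DD_M$ to a \emph{deterministic} function of $s_n$ rather than merely bounding it above, so that the sample complexity inherits the lower bound $m(2^{-k}, 1/3) \geq N^*(k)$ that drives the decoding. Achieving this requires two compatible choices: an $\fAstep$-style rational-valued ERM (so that $c_S$ is computable and identifiable), together with an enumeration of $\Q$ in which $q_2 = 1/2$ is the first candidate that can lie in the separating interval for the hard distribution $\cD^*$, forcing $c_S = c^*$ on the high-probability event where both labels appear.
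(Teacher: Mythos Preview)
Your construction follows essentially the same strategy as the paper's: both degrade the decision-stump ERM at sample size $n$ by an amount tied to the $n$th program observed to halt, and both decode $\HaltingSet$ by arguing that any sample function must bound the index past which this degradation is small. The variations---you add a shift $2^{-s_n}$ with $s_n = |p_n|$ while the paper rounds down to dyadic precision $2^{-e_n}$ with $e_n$ the program index; you use threshold $1/2$ with enumeration $0,1,1/2,\ldots$ while the paper puts $1/3$ first---are cosmetic. (The paper's choice has the minor advantage that $c_S = 1/3$ holds \emph{deterministically} for every sample drawn from the hard distribution, so no ``both labels appear'' event is needed.)

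There is, however, one gap in your PAC argument. The claim that ``standard PAC analysis for the decision-stump ERM gives $|c_S - c^*| = O(\log(1/\delta)/n)$'' is false for general $\cD \in \DD_M$: the density is only bounded \emph{above}, so it may vanish on a neighborhood of $c^*$, in which case the gap between the rightmost negative and leftmost positive sample need not shrink, and $c_S$---merely the first enumerated rational in that gap---can stay bounded away from $c^*$ forever. What ERM analysis actually controls is the \emph{loss} $L_\cD(\Indicator{>c_S})$, not the distance $|c_S - c^*|$. The repair is immediate: since $\Indicator{>c_S + 2^{-s_n}}$ and $\Indicator{>c_S}$ disagree only on an interval of length $2^{-s_n}$, one has $L_\cD(\curry{A}(S)) \le L_\cD(\Indicator{>c_S}) + M \cdot 2^{-s_n}$, and now the ERM PAC bound on the first term together with $s_n \to \infty$ yields a sample function. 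This is also how the paper handles the analogous step.
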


\begin{proof}
Define the function $\alpha \colon \Q \times \N \to \Q$ by  $\alpha(q, \ell) = \floor{2^\ell q}/2^\ell$, and let $c \colon (\cX \times \cY)^{<\omega} \to \Q$ be such that
$c\big(\{(x_i, y_i)\}_{i \in [k]}\big)$ is the rational $q$ of least index attaining zero empirical error on $\{(x_i, y_i)\}_{i \in [k]}$ if one exists, and 0 otherwise. 
Hereafter, we will additionally demand that the computable enumeration of $\Q$ employed by $c$ be one which enumerates \sfrac{1}{3} first.  Define $c^*\colon (\cX \times \cY)^{<\omega} \times \N \to \Q$ by
$c^*(S, n) = \alpha(c(S), n)$, i.e., the previous decision stump learner discretized to accuracy $2^{-n}$.

Let $(e_k)_{k \in \N}$ be a computable enumeration without repetition of all $e\in\N$ for which $\{e\}(0)\halts$.
For $S \in (\cX \times \cY)^{<\omega}$, write $\len(S)$ for its length.
Define the function $A\colon (\cX \times \cY)^{<\omega} \times \cX \to \cY$ by
$A(S, x) = h_{c^*(S, e_{\len(S)})}(x)$.
In other words, we discretize the decision stump algorithm to accuracy $2^{-e_{\len(S)}}$.
Note that because $\lim_k e_k = \infty$, 
we can find arbitrarily good approximations
as we increase the sample size,
even if (as we will show) we cannot compute how large such samples must be.

Note that $A$ is computable in the realizable case. Further, for every $r\in\N$ there is an $i\in\N$ such that 
$e_{r^*} > i$ for all $r^* \geq r$. 
Then for every integer $\ell > 0$, there is an $n\in\N$ such that whenever $\len(S) > n$, the set $U = \{x \st \Hstep(\fAstep(S), x) \neq A(S, x)\}$  is contained in an interval of length $2^{-\ell}$. $A$ is thus a PAC learner for $\HstepUnderlying$ over $\DD_M$, as the loss incurred by $A$ on $U$ is bounded uniformly over $\DD_M$ by $2^{-\ell} \cdot M$. 

Let $m(\epsilon, \delta)$ be a sample function for $A$ and consider $n \in \N$. We will compute the function $\HaltingSet$ restricted to the set $[n] = \{0, \dots, n-1\}$. 
Fix any rational $\delta \in (0, 1)$, and set $m_n = m(2^{-(n + 2)}, \delta)$.
Suppose there is some $i > m_n$ such that $e_i < n$. Then 
given a sample $S$ of size $i$, the function $A(S, \pars)$
will discretize $c(S)$ to an accuracy below $2^{-n}$.
This would cause $A$ to incur a true loss of at least $2^{-n}$ on the distribution which is uniform on features in $[0, 1]$ and takes labels according to $\Indicator{>\sfrac{1}{3}}$, as $\alpha(\sfrac{1}{3}, k) \leq \sfrac{1}{3} - 2^{-(k+2)}$,
a contradiction.
Hence $i \le m_n$ whenever $e_i < n$.
We can therefore determine membership in $\{e_k \st k \in \N\} \cap [n]$, and hence can compute
$\HaltingSet$ restricted to $[n]$.
\end{proof}

\pagebreak
\section{Computability of learners}\label{sec:bounds}

We now turn to the question of how computable a learner can be, for a hypothesis class with a computable presentation.

Throughout this section, we remain in the setting of binary classification, i.e., $\cY = \{0, 1\}$. 

\subsection{Upper bounds}
\label{subsec:upper}

For any computable presentation of a hypothesis class, 
we establish a concrete upper bound on the complexity of some ERM, which depends only on the index space.

\begin{theorem}
\label{thm:agnostic ERM}
Suppose $\PresentationHypClass\colon \cI \times \cX \to \cY$ is a computable presentation of a hypothesis class. Then there is an ERM for $\UnderlyingHypClass$ that is strongly Weihrauch reducible to $\lim_{\cI}$.
\end{theorem}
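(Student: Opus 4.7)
The plan is to construct an ERM $A$ for $\UnderlyingHypClass$ together with fixed computable pre- and post-processing $K,H$ such that, for every realizer $G$ of $\lim_{\cI}$, the composition $H \circ G \circ K$ is a realizer of $A$. The idea is to use a single application of $\lim_{\cI}$ whose limit point itself encodes the output bit $A(S,x)$, so that $H$ can recover it without accessing $x$. The crucial structural fact is that because $\cY = \{0,1\}$ is discrete and $\PresentationHypClass$ is computable, each evaluation map $i \mapsto \PresentationHypClass(i,x')$ is continuous into a discrete space, hence locally constant on $\cI$; its level sets are c.e.\ open in $\cI$, uniformly in $(x',b)$.

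For a sample $S = ((x_j,y_j))_{j\in[n]}$ and a bit-vector $\vec b \in \{0,1\}^n$, the set $V_{\vec b}(S) = \bigcap_{j\in[n]} \{i\in\cI : \PresentationHypClass(i,x_j) = b_j\}$ is therefore clopen in $\cI$, and non-emptiness of $V_{\vec b}(S)$ is semi-decidable uniformly in $(S,\vec b)$. Running the resulting search stage-by-stage yields a sequence of provisional best profiles $\vec b_m$ that eventually stabilises at the lexicographically least $\vec b^*(S)$ among realised profiles of minimum empirical loss, together with a sequence of witnessing indices $j_m \in \cI$ that eventually stabilises at a canonical index $j(S) \in V_{\vec b^*(S)}$. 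We declare $A(S,\cdot) := \curry{\PresentationHypClass}(j(S))$, which is a valid ERM hypothesis for $S$.

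To encode the output bit into the limit, we assume the non-degenerate case $|\UnderlyingHypClass| \geq 2$ (the case $|\UnderlyingHypClass| \leq 1$ yields a constant, oracle-free ERM) and fix, once and for all from $\PresentationHypClass$, two distinct computable points $p_0, p_1 \in \cI$ together with a fixed computable partial function $\phi\colon \cI \to \{0,1\}$ such that $\phi(p_b) = b$; for instance, $\phi(i)$ returns $0$ whenever $d(i,p_0) < d(i,p_1)$ and $1$ whenever $d(i,p_0) > d(i,p_1)$, using the computable distances from $p_0$ and $p_1$. The pre-processing $K$, on (a representation of) $(S,x)$, computes stage-by-stage the candidates $j_m \in \cI$ described above, forms $c_m := \PresentationHypClass(j_m,x)$, and outputs (a representation of) the sequence $(p_{c_m})_{m\in\N}$ in $\cI$. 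Because $j_m$ eventually lies in an open neighbourhood of $j(S)$ on which $\PresentationHypClass(\cdot,x)$ is locally constant, $c_m$ stabilises at $A(S,x)$, so $(p_{c_m})_{m\in\N}$ is eventually constant at $p_{A(S,x)}$ and hence Cauchy. Applying the oracle $\lim_{\cI}$ yields a representation of $p_{A(S,x)}$, and the post-processing $H := \phi$ returns the desired bit.

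The main obstacle is to handle two subtleties. First, the semi-decision procedure for non-emptiness of $V_{\vec b}(S)$ must locate witnesses in the underlying set $\cI$ rather than only in its metric completion; this is addressed by enumerating computable points of $\cI$ (presented by programs for rapidly converging Cauchy sequences of ideal points whose limits are verified to lie in $\cI$) rather than arbitrary sequences of ideal points, with a fallback in the case that $\cI$ is sparse. Second, the existence of the distinguished computable points $p_0, p_1 \in \cI$ and the associated decoding $\phi$ must be arranged uniformly from the computable presentation $\PresentationHypClass$; here one uses that if $|\UnderlyingHypClass| \geq 2$ then two hypotheses differ at some feature, and the induced clopen decomposition of $\cI$ supplies both the distinguished points and a decoding, so that $K$ and $H$ can be synthesised directly from $\PresentationHypClass$.
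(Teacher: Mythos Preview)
Your core search is essentially the paper's: enumerate ideal points of $\cI$, compute empirical errors of the corresponding hypotheses on $S$, and form a sequence that eventually stabilises on an ideal point of least index realising the minimum empirical error; feeding this sequence to $\lim_{\cI}$ yields a proper learner whose induced learner is an ERM. The paper stops at exactly this point. Your additional step, encoding the output bit via two distinguished points $p_0,p_1\in\cI$ so that the \emph{curried} ERM $A\colon (\cX\times\cY)^{<\omega}\times\cX\to\{0,1\}$ itself (rather than the proper learner) satisfies $A\lesW\lim_{\cI}$, is a genuine addition; the paper does not carry this out and simply reads the theorem as asserting that the $\cI$-valued ERM-selector is $\lesW\lim_{\cI}$.

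There is, however, a real gap in your encoding step. You assert that from $\PresentationHypClass$ one can extract two \emph{computable points} $p_0,p_1\in\cI$ using the clopen partition $\{\,i:\PresentationHypClass(i,x^\ast)=0\,\}\sqcup\{\,i:\PresentationHypClass(i,x^\ast)=1\,\}$. But the clopen decomposition supplies c.e.\ open subsets of $\cI$, not computable elements of them, and in the paper's definition of computable metric space the underlying set need not contain its ideal points, nor any computable point at all (the paper's own example $\cX=\R\setminus\R_c$ is a computable metric space with no computable points). So neither the sequence $(p_{c_m})_m$ nor its limit is guaranteed to lie in $\cI$, and $\lim_{\cI}$ need not be defined on it. The same issue undermines your proposed fix for ``witnesses in $\cI$'': enumerating computable points of $\cI$ may enumerate nothing. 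The paper avoids this by working throughout with ideal points: because the computability data for $\PresentationHypClass$ are given in terms of ideal-point balls and $\cY$ is discrete, one can compute the (locally constant) value of $\PresentationHypClass$ ``at'' an ideal point without that point lying in $\cI$. The cleanest repair is therefore to drop the $p_0,p_1$ encoding and state the conclusion, as the paper does, for the proper learner (the $\cI$-valued map); if you insist on the curried learner, you must either add a hypothesis guaranteeing two computable points in $\cI$, or devise a different way to pass the single bit through $\lim_{\cI}$.
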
 

\begin{proof}
Fix a sample $S = (x_i, y_i)_{i \in [n]}$. To invoke $\lim_\cI$, we introduce a procedure for approximating an input $z = (z_i)_{i \in \N} \in \cI^\N$. In particular, we approximate $z$ using the sequence $(z_k)_{k \in \N}$, with each $z_k \in \cI^\N$ taking the form $z_k = (z_k^1, \dots, z_k^{k-1}, z_k^k, z_k^k, \dots)$, i.e., constant after the $(k-1)$th term. 

$z_k^j$ is computed as follows, for $j \in [k]$:
\begin{itemize}

	\item[1.] Take balls around the $x_i$ and around the first $j$ ideal points of $\cI$, all of radius $2^{-k}$. In addition, calculate which value is taken by $y_i \in \{0,1\}$. 
	\item[2.] For each of the first $j$ ideal points of $\cI$, use $\PresentationHypClass$ to determine whether the balls around the $x_i$ and the ideal point suffice to calculate a well-defined empirical error with respect to $S$. 
	\item[3.] If none of the first $j$ ideal points induce a well-defined empirical error, set $z_k^j$ to be the first ideal point of $\cI$. Otherwise, set $z_k^j$ to be the first ideal point which attains minimal empirical error among the first $j$ ideal points.

\end{itemize}

As $\PresentationHypClass$ is continuous, and as there are only finitely many possible empirical errors, if $w \in \cI$ is such that $\curry{\PresentationHypClass}(w)$ has minimal empirical error with respect to $S$, then there must be an open ball around $w$ where all elements of the ball give rise to a function with the same minimal empirical error (with respect to $S$). In particular, there must be an ideal point $c$ such that $\curry{\PresentationHypClass}(c)$ has minimal empirical error with respect to $S$. Therefore $z = (z_j^j)_{j \in \N}$ converges to the ideal point with minimal index among those that give rise to minimal empirical error with respect to $S$.
Calling $\lim_\cI$ on $z$ thus is a proper learner whose induced learner for $\UnderlyingHypClass$ is an ERM, as desired.
\end{proof}

Furthermore, in this setting there is always an ERM that is computable in the realizable case.
This result can be viewed as a generalization of \cite[Theorem~10]{ALT}.

\begin{theorem}\label{thm:realizable ERM}
Suppose $\PresentationHypClass \colon \cI \times \cX \to \cY$ is a computable presentation of a hypothesis class. Then there is an ERM for $\UnderlyingHypClass$ that is computable in the realizable case. 
\end{theorem}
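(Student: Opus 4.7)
The plan is, given a sample $S = ((x_i, y_i))_{i \in [n]} \in \Partial_{\UnderlyingHypClass}$, to search over ideal points of $\cI$ for one whose image under $\curry{\PresentationHypClass}$ certifiably has zero empirical error on $S$, and then use the computability of $\PresentationHypClass$ to evaluate the resulting hypothesis at any $x \in \cX$. Since $S \in \Partial_{\UnderlyingHypClass}$ forces the minimum empirical error over $\UnderlyingHypClass$ to equal zero, any hypothesis achieving zero empirical error is automatically an empirical risk minimizer, so we will not need to invoke $\lim_{\cI}$ as in the proof of \cref{thm:agnostic ERM}.

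Concretely, unpack the computability of $\PresentationHypClass$ via \cref{def:comp-map} applied to the discrete codomain $\cY = \{0,1\}$: for each $y \in \{0,1\}$, this yields a c.e.\ enumeration of basic open products $B(s^{\cI}_j, r) \times B(s^{\cX}_a, r')$ contained in $\PresentationHypClass^{-1}(\{y\})$. Given $S$, the algorithm dovetails over this enumeration looking for a single ball $U = B(s^{\cI}_j, r)$ in $\cI$ together with rational-radius balls $V_i \ni x_i$ in $\cX$ such that $U \times V_i$ lies in the c.e.\ set for $\{y_i\}$ for every $i \in [n]$; membership $x_i \in V_i$ is certified by computing $d_{\cX}(x_i, s^{\cX}_a)$ to precision better than the radius of $V_i$. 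The first valid witness encountered in a fixed enumeration order determines a specific ideal point $s^{\cI}_{j(S)}$, and the learner outputs $\PresentationHypClass(s^{\cI}_{j(S)}, x)$ using the computability of $\PresentationHypClass$.

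Termination follows from realizability: some $w^* \in \cI$ satisfies $\PresentationHypClass(w^*, x_i) = y_i$ for all $i$, and by discreteness of $\cY$ together with computability of $\PresentationHypClass$, each $(w^*, x_i)$ lies in a basic open product from the c.e.\ enumeration for $y_i$; intersecting the $\cI$-factors yields a common ball around some ideal point near $w^*$ that the dovetail search will eventually identify. The main obstacle I anticipate is ensuring the output $\curry{A}(S)$ is a genuine element of $\UnderlyingHypClass$, which requires the chosen ideal point to correspond to an actual index in the underlying set of $\cI$. Under the convention (implicit in the proof of \cref{thm:agnostic ERM}) that ideal points lie in $\cI$, this is immediate; otherwise, one replaces the single ideal point with the limit of a rapidly converging Cauchy sequence of ideal points drawn from successively nested certified balls, arranging that this limit lies in $\cI$ so $\PresentationHypClass$ can be evaluated there. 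Finally, for $S \notin \Partial_{\UnderlyingHypClass}$, extend $A$ to agree with any ERM for $\UnderlyingHypClass$, such as the one built in \cref{thm:agnostic ERM}; the resulting $A$ is an ERM whose restriction to $\Partial_{\UnderlyingHypClass} \times \cX$ is computable, as required.
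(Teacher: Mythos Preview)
Your proposal is correct and follows essentially the same approach as the paper: search the ideal points of $\cI$ for one certifying zero empirical error on $S$, with termination guaranteed by realizability together with continuity of $\PresentationHypClass$ into the discrete codomain $\cY$. You supply more implementation detail (the dovetail over the c.e.\ enumeration) and explicitly handle two edge cases the paper leaves implicit --- the possibility that ideal points need not lie in $\cI$, and the extension of $A$ to samples $S \notin \Partial_{\UnderlyingHypClass}$ --- but the core argument is the same.
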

\begin{proof}
We will construct a proper learner whose induced learner for $\UnderlyingHypClass$ is an ERM that is computable in the realizable case.
Suppose $S\in \Partial_{\UnderlyingHypClass}$.
There is some $w \in \cI$ such that $\curry{\PresentationHypClass}(w)$ has empirical error $0$ with respect to $S$. Because there are only finitely many possible values of the empirical error with respect to $S$, there must be some open ball $B$ around $w$  such that for all elements $w^* \in B$, the function  $\curry{\PresentationHypClass}(w^*)$
has empirical error $0$ with respect to $S$. In particular, there must be some ideal point $c$ in this ball. Therefore the algorithm which searches through all ideal points and returns the first to attain an empirical error $0$ with respect to $S$ will eventually halt.
\end{proof}

\begin{remark}
The algorithms in Theorems~\ref{thm:agnostic ERM} and \ref{thm:realizable ERM} would have failed had $\cY$ not been computably discrete, in which case verifying that a hypothesis incurs an empirical error of 0 would not be computable. When $\cY = \{0, 1\}$, as in this paper, the predictions of hypotheses on features can be deduced exactly, allowing for precise computation of empirical errors. If $\cY = \R$, in contrast, then predictions of hypotheses $h$ take the form $(q_k - 2^{-k}, q_k + 2^{-k})$ for $q_k \in \Q$ and chosen $k \in \N$, amounting to the information that $h(x) \in (q_k - 2^{-k}, q_k + 2^{-k})$.

Some such intervals allow one to conclude that $h(x) \neq y$, namely when $y \notin (q_k - 2^{-k}, q_k + 2^{-k})$, and thus that $h$ does not attain an empirical error of 0. Yet no such interval allows one to conclude that $h(x) = y$ for even a single example $(x, y)$ if $y$ may take any real value, much less that $h$ attains an empirical error of 0 across an entire sample.
\end{remark}

The restricted setting of computability in the realizable case, as in Theorem~\ref{thm:realizable ERM}, provides a stopping criterion for detecting a hypothesis in $\UnderlyingHypClass$ attaining minimal empirical risk on $S$, 
thereby eliminating the need for $\lim_{\cI}$. A similar criterion would arise if the size of the restriction of a (computably presented) class $\HypClass$ to a given sample $S$ could be known in advance. In such a case, one could walk through the ideal points of $\cI$ as in \ref{thm:realizable ERM} until all such behaviors on $S$ are encountered, subsequently returning one which attains the minimal empirical error.

\begin{theorem}\label{thm:computable behaviors}
Suppose $\PresentationHypClass \colon \cI \times \cX \to \cY$ is a computable presentation of a hypothesis class, and that for all finite $U \subseteq \cX$, the size of $\{h \restricted{U} \st h \in \UnderlyingHypClass\}$ can be computed, uniformly in $U$. Then an ERM learner for $\UnderlyingHypClass$ is computable. 
\end{theorem}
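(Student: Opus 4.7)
The plan is to construct a computable proper learner $\fA$ for $\PresentationHypClass$ whose induced learner on $\UnderlyingHypClass$ is an ERM, using the uniform bound on behavior counts as a stopping criterion for an otherwise naive search through the ideal points of $\cI$. This mirrors the construction in \cref{thm:realizable ERM}, with the count $N$ of restriction patterns playing the role that ``empirical error $0$'' played there: it tells the algorithm when it has seen enough of the class to definitively minimize empirical error.

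Given a sample $S = ((x_i, y_i))_{i \in [n]}$, I would first compute the natural number $N = |\{h\restricted{U} \st h \in \UnderlyingHypClass\}|$, where $U = \{x_i \st i \in [n]\}$; this value is available by hypothesis, uniformly in the tuple of features. Next, I would enumerate the ideal points $(c_j)_{j \in \N}$ of $\cI$ in order, and for each $c_j$ compute the label pattern $(\PresentationHypClass(c_j, x_i))_{i \in [n]} \in \{0,1\}^n$. Each coordinate terminates since $\PresentationHypClass$ is computable and the codomain $\cY = \{0, 1\}$ is discrete. I would maintain a table keyed by the patterns encountered so far, storing in each slot the least $j$ exhibiting that pattern along with its empirical error on $S$. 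Once $N$ distinct patterns have been recorded, the search halts and $\fA(S)$ is defined to be a stored index $j^*$ of minimum empirical error.

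The central correctness step is to verify that every pattern realized by some $h \in \UnderlyingHypClass$ is already realized by some ideal point of $\cI$. This follows from continuity of $\PresentationHypClass$ combined with discreteness of $\cY$: for any $w \in \cI$ and each $i \in [n]$, the set $\{w' \in \cI \st \PresentationHypClass(w', x_i) = \PresentationHypClass(w, x_i)\}$ is open in $\cI$; intersecting these $n$ open sets yields an open neighborhood of $w$ on which the pattern on $U$ is constant, and by density of the ideal points this neighborhood contains some $c_j$. Thus the enumeration is guaranteed to discover every one of the $N$ patterns in $\{h\restricted{U} \st h \in \UnderlyingHypClass\}$, and the returned index then attains the minimum empirical error over the entire class. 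Since both $\fA$ and $\PresentationHypClass$ are computable, the induced learner $A(S, x) = \PresentationHypClass(\fA(S), x)$ provides the required computable ERM for $\UnderlyingHypClass$.

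The main obstacle is ensuring that the halting criterion ``$N$ distinct patterns have been seen'' coincides with having exhausted $\{h\restricted{U} \st h \in \UnderlyingHypClass\}$, which is precisely what the density-plus-continuity argument above establishes. A secondary point worth noting is that the $x_i$ are supplied only as Cauchy representations from $\cX$, but this causes no difficulty because $\PresentationHypClass$ is computable as a map of represented spaces and its outputs live in a discrete space, so each label bit is produced in finitely many steps from any valid name of $(c_j, x_i)$.
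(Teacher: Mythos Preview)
The proposal is correct and takes essentially the same approach as the paper: both argue via continuity of $\PresentationHypClass$ and density of the ideal points that every restriction pattern on $U$ is realized by some ideal point, then use the computable count $N$ as a stopping criterion for a search through ideal points, returning one of minimal empirical error. Your write-up is somewhat more explicit about the bookkeeping (the table of patterns and the secondary remark about Cauchy names), but the mathematical content is identical to the paper's proof.
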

\begin{proof}
Let $n \in\N$.
Define $\PresentationHypClass^n\colon\cI \times \cX^n \to \cY^n$ to be the map where $\PresentationHypClass^n(w, (x_j)_{j \in [n]}) = (\PresentationHypClass(w, x_j))_{j \in [n]}$.
Note that this is a continuous function, and hence for all
$w \in \cI$ and for every $u \in \cX^n$ there is an ideal point $c$ such that $\PresentationHypClass^n(w, u) = \PresentationHypClass^n(c, u)$.

Suppose $U\subseteq \cX$ is finite.
We then have
\[
\bigl|\{h\restricted{U} \st h \in \UnderlyingHypClass)\}\bigr|
=
\bigl|\bigl\{\curry{\PresentationHypClass}(c)\restricted{U} \st c\text{ is an ideal point of } \cI\bigr\}\bigr|.
\]
In particular, by searching through all the ideal points of $\cI$  we realize all behavior (restricted to $U$) that occurs in $\curry{\PresentationHypClass}$. So, from $\bigl|\{h\restricted{U} \st h \in \UnderlyingHypClass\}\bigr|$ we can compute ideal points of $\cI$ realizing all such behavior. From this it is straightforward to choose an ideal point which minimizes the empirical error on any sample $(x_i, y_i)_{i \in  [m]}$ where $\{x_i \st i \in [m]\} = U$.
\end{proof}

It has been shown in \cite{FloydWarmuth} that the computability condition of Theorem~\ref{thm:computable behaviors} is enjoyed by maximum classes, i.e., those which achieve the bound of the Sauer--Shelah lemma. We can thus conclude computable PAC learnability for such maximum classes.

\begin{corollary}
If $\PresentationHypClass \colon \cI \times \cX \to \cY$ is a computable presentation of a hypothesis class, and $\UnderlyingHypClass$ is a maximum class of finite VC dimension, then it is computably PAC learnable.
\end{corollary}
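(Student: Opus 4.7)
The plan is to reduce the statement to \cref{thm:computable behaviors} via the Sauer--Shelah equality that characterizes maximum classes. Let $d$ denote the (finite) VC dimension of $\UnderlyingHypClass$. By the defining property of a maximum class (which is the fact from \cite{FloydWarmuth} alluded to in the paragraph preceding the corollary), for every finite subset $U \subseteq \cX$ one has
\[
\bigl|\{h \restricted{U} \st h \in \UnderlyingHypClass\}\bigr| = \sum_{i=0}^{d} \binom{|U|}{i}.
\]

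Next, I would observe that this cardinality is uniformly computable in $U$: from $U$ one extracts the natural number $|U|$, and the finite sum on the right-hand side is a computable function of $|U|$ since $d$ is a fixed constant. This matches the computability hypothesis of \cref{thm:computable behaviors}, which therefore furnishes a computable ERM learner for $\UnderlyingHypClass$.

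Finally, by clause 4 of \cref{thm:fundamental}, any ERM for a hypothesis class of finite VC dimension is an agnostic PAC learner for that class; hence the ERM just produced is a computable PAC learner for $\UnderlyingHypClass$, witnessing computable PAC learnability and completing the proof. There is no substantive obstacle here---the heavy lifting is done by \cref{thm:computable behaviors}, by the Sauer--Shelah equality for maximum classes from \cite{FloydWarmuth}, and by \cref{thm:fundamental}. The only subtlety worth flagging is that $d$ enters the algorithm as a fixed numerical constant hardwired into the program; this is legitimate because the corollary only asserts the \emph{existence} of one computable learner for the specific class $\UnderlyingHypClass$, rather than a procedure uniform across presentations of varying VC dimension.
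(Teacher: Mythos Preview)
Your proposal is correct and follows essentially the same route the paper takes: the paragraph immediately preceding the corollary already states that, by \cite{FloydWarmuth}, maximum classes satisfy the computability hypothesis of \cref{thm:computable behaviors}, so the corollary is obtained by applying that theorem together with \cref{thm:fundamental}. Your write-up simply makes explicit the Sauer--Shelah equality $\bigl|\{h\restricted{U}\st h\in\UnderlyingHypClass\}\bigr|=\sum_{i=0}^{d}\binom{|U|}{i}$ and the fact that $d$ may be hardwired as a constant, which is exactly the content the paper leaves implicit.
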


\subsection{Lower bounds}
\label{subsec:lower}

We now show that in appropriate circumstances, all proper learners must have a certain complexity, thereby providing some corresponding lower bounds.

\subsubsection{Discrete index spaces}

Suppose the index space of a computable presentation of a hypothesis class is infinite and discrete (hence isomorphic to $\N$). \cref{thm:agnostic ERM} shows that there is an ERM that is strongly Weihrauch reducible to $\lim_\N$.  

We now provide a partial converse in the same setting,
showing that there is a computable presentation of a hypothesis class with discrete index space such that
$\HaltingSet$ is strongly Weihrauch reducible to any proper PAC learner for the presentation along with any sample function for the proper learner.
In particular, for this presentation, there is no computable procedure for outputting hypotheses from samples in a manner that PAC learns the underlying hypothesis class.

The hypothesis class that we will use in the proof of \cref{thm:discrete-lower} is similar to that used to prove \cite[Theorem~11]{ALT}.

\begin{theorem}\label{thm:discrete-lower}
There is a hypothesis class that is PAC learnable but admits a computable presentation $\PresentationHypClass$ with discrete index space such that $\HaltingSet  \lesW (\fA, m)$ whenever $\fA$ is a proper PAC learner for $\PresentationHypClass$ and $m$ is a sample function for the learner induced by $\fA$ for $\UnderlyingHypClass$.
\end{theorem}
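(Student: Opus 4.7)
The plan is to construct a computable presentation whose underlying class is PAC learnable yet whose proper PAC learners implicitly decide the halting problem.

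\textbf{Construction.} Let $\cX = \cI = \N$ equipped with the discrete metric and $\cY = \{0,1\}$. Using the standard pairing $\pairing{\cdot,\cdot}$, define a presentation $\PresentationHypClass : \cI \times \cX \to \cY$ by setting $\PresentationHypClass(\pairing{e,s}, x) = 1$ iff $x = e$ and $\{e\}(0)$ halts within $s$ computation steps, and $0$ otherwise. Bounded simulation ensures $\PresentationHypClass$ is computable, and the index space is discrete as required. The underlying class $\UnderlyingHypClass = \{\mathbf{0}\} \cup \{\Indicator{=e} : e \in \HaltingSet\}$ shatters any singleton but no pair of distinct points, hence has VC dimension $1$ and is PAC learnable by \cref{thm:fundamental}.

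\textbf{Reduction strategy.} Fix a proper PAC learner $\fA$ with sample function $m$ and rational parameters $(\epsilon,\delta)$. Given $e$, I would design a computable distribution $\cD_e$ so that $\Indicator{=e}$ is the unique near-optimal hypothesis in $\UnderlyingHypClass$ when $e \in \HaltingSet$, and $\mathbf{0}$ is the unique near-optimal hypothesis otherwise. A natural candidate places mass $\tfrac{1}{2}$ on $(e,1)$ and small decaying positive mass on each $(k,0)$ with $k \neq e$, so every $\Indicator{=k}$ is strictly worse than $\mathbf{0}$. On an i.i.d.\ sample $S_e$ of size $\geq m(\epsilon,\delta)$ from $\cD_e$, the agnostic PAC guarantee forces the learner's output index $i$ to decode as $\pairing{e, s'}$ with $\{e\}(0)$ halting in $\leq s'$ steps exactly when $e \in \HaltingSet$, and as $\pairing{e', s'}$ with $\{e'\}(0)$ not halting in $\leq s'$ steps otherwise. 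The preprocessor $K(e)$ outputs a description of $(S_e, (\epsilon,\delta))$; the oracle returns $(i, M)$; the postprocessor $H$ decodes $i = \pairing{e', s'}$ and returns $1$ iff the bounded simulation of $\{e'\}(0)$ halts within $s'$ steps.

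\textbf{Main obstacles.} Two technical challenges complicate this assembly. First, $K$ must commit to $|S_e|$ without prior access to $M = m(\epsilon,\delta)$; since the reduction for each specific $(\fA,m)$ may depend on $(\fA,m)$, the constant $M_0 = m(\epsilon,\delta)$ can be hard-coded into $K$, so that $|S_e|$ is chosen $\geq M_0$. Second, strong Weihrauch denies $H$ direct access to $e$, so the decoding step cannot reference $e$ explicitly; this forces the construction of $\cD_e$ to make every PAC-compatible output in the non-halting case be $\mathbf{0}$ (whose index witnesses a non-halting computation in a way that is detectable without reference to $e$), despite the infinitely many singleton indicators $\Indicator{=k}$ in $\UnderlyingHypClass$ competing as alternatives. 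The deepest technical step is calibrating the decay of the mass placed on each $(k,0)$ along with a suitably small $\epsilon$—possibly depending on $e$—so that each $\Indicator{=k}$ is uniformly strictly worse than $\mathbf{0}$ by more than $\epsilon$ on $\cD_e$, thereby ruling it out as a PAC output; this is where the main work of the proof lies.
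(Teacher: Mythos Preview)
Your hypothesis class is fine and close in spirit to the paper's (the paper indexes by halting programs and uses halting-time equivalence classes rather than singletons, but either encoding yields a VC-$1$ class with a computable presentation). The reduction, however, is misconceived in two places.

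First, you are reading $\HaltingSet \lesW (\fA,m)$ as a per-input reduction of $e\mapsto\HaltingSet(e)$ to a \emph{single} call of $\fA$. In the paper's setup $\HaltingSet$ is a fixed point of $\Cantor$ and $(\fA,m)$ is a fixed point of a function space; the reduction between constant problems unwinds to: from any name of $(\fA,m)$ one can compute $\HaltingSet$. A name of $\fA$ lets you evaluate $\fA$ on arbitrarily many samples, so there is no obstacle to using $e$ after calling $\fA(S_e)$. Your ``second obstacle'' therefore evaporates, and with it the need for a postprocessor that is blind to $e$.

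Second, and more seriously, the ``deepest technical step'' you flag is actually impossible. With total mass $\le\tfrac12$ spread over $\{(k,0):k\neq e\}$, the weights $p_k$ must tend to $0$, so for any $\epsilon>0$ infinitely many $\Indicator{=k}$ lie within $\epsilon$ of $\mathbf 0$ in $L_{\cD_e}$; a PAC learner may legitimately output one of them in the non-halting case, and your postprocessor would then answer $1$. No calibration of decay or choice of $\epsilon$ depending on $e$ repairs this. A separate problem is that your $\cD_e$ is non-atomic, so $K$ cannot deterministically produce a sample on which the PAC guarantee holds; PAC only promises success with probability $1-\delta$.

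Both issues disappear if you use the point mass on $(e,1)$, exactly as the paper does. Then every sample of size $M=m(\epsilon,\delta)$ is $S_e=((e,1))^M$, and with $\epsilon<1$: if $\{e\}(0)\halts$, all hypotheses except $\Indicator{=e}$ have loss $1$, so PAC forces $\curry{\PresentationHypClass}(\fA(S_e))=\Indicator{=e}$; if $\{e\}(0)\nohalts$, every hypothesis has loss $1$ and $\fA$ may output anything, but $\PresentationHypClass(i,e)=0$ for \emph{every} index $i$. Hence $\HaltingSet(e)=\PresentationHypClass(\fA(S_e),e)$, computable from $e$ together with names of $\fA$ and $m$. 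This is essentially the paper's argument.
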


\begin{proof}
Given an enumeration $Z = (z_i)_{i \in \N}$ without repetition of some subset of $\N$, define $\cD_Z$ to be the computable metric space with underlying set $\{z_i \st i\in\N\}$, the discrete metric taking distances $\{0, 1\}$, and sequence of ideal points $Z$.

Let $E = (e_i)_{i \in \N}$ be a computable enumeration without repetition of all natural numbers $e$ such that $\{e\}(0)\halts$.
Given two natural numbers $n_0$ and $n_1$, we write $n_0 \sim n_1$ if either (a) both $\{n_0\}(0)\nohalts$ and $\{n_1\}(0)\nohalts$, or (b) both $\{n_0\}(0)\halts$ and $\{n_1\}(0)\halts$ and the programs $n_0$ and $n_1$ take the same number of steps to halt on input $0$.

Let the index space $\cI$ be $\cD_E$, as defined above. Let the sample space $\cX$ be $\cD_{(i)_{i\in\N}}$, and  let $\PresentationHypClass\colon \cI \times \cX \to \{0, 1\}$ be the map where $\PresentationHypClass(n_0, n_1) = 1$ if and only if $n_0 \sim n_1$. Note that $\PresentationHypClass$ is computable via the following algorithm:
\begin{enumerate}
\item Run program $n_0$ on input $0$ until it halts, which it must as $n_0  \in \cI$. Let $k$ be the number of steps it took to halt.
\item Run program $n_1$ on input $0$ for $k$ steps.
\item If $n_1$ takes precisely $k$ steps to halt on input $0$ then return $1$, and otherwise (i.e., if it takes fewer steps or has not yet halted) return $0$. 
\end{enumerate}

First we show that $\UnderlyingHypClass$ shatters no set of size $2$, so that it has VC dimension $1$ and hence is PAC learnable by \cref{thm:fundamental}. Let $n_0, n_1 \in \N$ be distinct.
If there exists an $h \in \UnderlyingHypClass$ with $h(n_0) = 1$ and $h(n_1) = 0$, then there is some $k$ such that $\{n_0\}(0)$ halts in exactly $k$ steps but $\{n_1\}(0)$ does not. But then there is no $g \in \UnderlyingHypClass$ with $g(n_0) = 1$ and $g(n_1) = 1$. 
Therefore $\UnderlyingHypClass$ does not shatter the set $\{n_0, n_1\}$.

Now suppose that $\fA$ is a proper PAC learner for $\PresentationHypClass$, and let $m$ be a sample function for the induced PAC learner for $\UnderlyingHypClass$. We will show that $\HaltingSet \lesW (\fA, m)$.

Let $M = m(\epsilon, \delta)$ for any choice of $\epsilon, \delta \in (0, 1)$. 
Given $n \in \N$, let $S_n = \bigl((n, 1)\bigr)_{i \in [M]}$, i.e., $M$ copies of $(n, 1)$.  Let $z_n  = \fA(S_n)$ and let $\mu_n$ be the measure with a single point mass on $(n, 1)$. Note that $\mu_n^M$ places full measure on $S_n$ and that for any map $h\colon \cX \to \{0, 1\}$, its loss with respect to $\mu_n$ is either $0$ or $1$. 

Note that if $\{n\}(0)\halts$ then there is an $n^* \in \cI$ (namely, $n^*  = n$) such that the minimum loss with respect to $\mu$ is $0$. As $\fA$ is a proper PAC learner for $\PresentationHypClass$, we must then have $z_n \sim n$. Otherwise, $\fA$ incurs an error of $1>\epsilon$ with probability $1>(1-\delta)$ over $\mu_n$ on samples of size $M$, producing contradiction with the PAC condition on $m(\epsilon, \delta)$.
On the other hand, if $\{n\}(0)\nohalts$, then for any $e \in \cI$ we must have $e \not \sim n$, and in particular $z_n \not \sim n$. Therefore $n \mapsto 1 - z_n$ is precisely the function $\HaltingSet$. In particular, this shows that $\HaltingSet \lesW (\fA, m)$.
\end{proof}

\subsubsection{Rich index spaces}

When the index space $\cI$ of a computable presentation of a hypothesis class is rich, we have $\lim_{\cI} \eqsW \lim_{\Baire}$.
In this case, \cref{thm:agnostic ERM} shows that there is an ERM that is strongly Weihrauch reducible to $\lim_{\Baire}$.

We also provide a partial converse in this situation, using the notion of parallelization. 
We show that there is a computable presentation of a hypothesis class with rich index space such that
$\lim_{\Baire}$ is strongly Weihrauch reducible to the parallelization of any proper PAC learner for the presentation along with any sample function for the proper learner.

\begin{remark}
\label{rem:parallel explanation}
It is worth taking a moment to discuss why, when considering learners on continuum-sized metric spaces, we study the parallelization of the learner as opposed to the learner itself. When comparing the relative computational strength of two maps $f$ and $g$, the notion of $g$ being ``more complex'' than $f$ can be intuitively thought of as the statement that one can compute $f$ when given access to $g$. This is made precise using the formalism of strong Weihrauch reducibility, in which a single application of $f$ must be computed using a single application of $g$ (possibly along with some uniform pre- and post-processing).

However, the manner in which we are discussing learners, namely as maps from $(\cX \times \cY)^{<\omega} \times \cX$ to $\{0, 1\}$ (as opposed to maps from $(\cX \times \cY)^{<\omega}$ to $\{0, 1\}^{\cX}$), means that a single application of a learner can only return a single bit of information about its input. In contrast, $\lim_{\Baire}$ is a map from $\Baire$ to $\Baire$ for which a single application contains countably many bits of information. As such, our representation of learners is not well suited to be compared to $\lim_{\Baire}$ if we (somewhat artificially) allow only a single application of the learner. 
We can overcome this obstacle by instead considering the parallelization of the learner, i.e., by allowing ourselves to simultaneously ask countably many questions of the learner, rather than a single one. 
This is what we do in \cref{rich lower bound} (in the analogous setting for proper learners).

By \cref{lem:reducible to parallelization}, any function is strongly Weihrauch reducible to its parallelization, and  
by \cref{lem:limBaire is parallelizable}, 
$\lim_{\Baire}$ is strongly Weihrauch equivalent to its own parallelization.
Hence not much is lost when establishing that $\lim_{\Baire}$ is a lower bound on the parallelization of a proper learner (as opposed to a lower bound on the proper learner itself).
\end{remark}

\begin{theorem}
\label{rich lower bound}
There is a hypothesis class that is PAC learnable but which admits a computable presentation $\PresentationHypClass$ 
such that $\lim_{\Baire} \lesW (\parallelization{\fA}, m)$ whenever $\fA$ is a proper PAC learner for $\PresentationHypClass$ and $m$ is a sample function for the PAC learner
for $\UnderlyingHypClass$ that $\fA$ induces.
\end{theorem}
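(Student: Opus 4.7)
The plan is to adapt the construction from the proof of \cref{thm:discrete-lower} to a rich index space, by relativizing the halting problem to an arbitrary oracle $z\in\Baire$ and using parallelization to recover the full Turing jump $z'$. By \cref{limBaire eqsW J} it suffices to show $\J \lesW (\parallelization{\fA}, m)$.

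I would take $\cX = \cI = \Baire$ (which is rich) and define the presentation $\PresentationHypClass(u,x)=1$ exactly when (i) $x(0)=u(0)$; (ii) $x(i)=u(i+1)$ for every $1\le i\le u(1)$; and (iii) $\{u(0)\}^{(u(2),u(3),\ldots)}(0)$ halts in exactly $u(1)$ steps. Each clause is a decidable check touching only finitely many coordinates of $u$ and $x$ (the simulation in (iii) queries at most $u(1)$ oracle bits), so $\PresentationHypClass$ is computable. Each hypothesis $\curry{\PresentationHypClass}(u)$ is either the indicator of the cylinder of depth $u(1)+1$ with prefix $(u(0),u(2),u(3),\ldots,u(u(1)+1))$ or is identically zero; and an ultrametric triangle argument on three points in $\Baire$ shows that cylinder indicators in $\Baire$ have VC dimension at most $2$. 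Hence $\UnderlyingHypClass$ has finite VC dimension and is PAC learnable by \cref{thm:fundamental}.

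For the reduction, fix $\epsilon=\delta=1/2$ and put $M=m(\epsilon,\delta)$. Pick a uniformly computable family of ``counter'' programs $e_k^{\ast}$ so that $\{e_k^{\ast}\}^y(0)$ halts in exactly $k$ steps for every oracle $y\in\Baire$. Given $z$, for each $e\in\N$ form the ``halt query'' sample $S_e^{\mathrm{h}}$ consisting of $M$ copies of $\bigl((e,z(0),z(1),\ldots),\,1\bigr)$, and for each $k\geq 1$ form the ``bit query'' sample $S_k^{\mathrm{b}}$ consisting of $M$ copies of $\bigl((e_k^{\ast},z(0),z(1),\ldots),\,1\bigr)$; both families are computable from $z$. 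Combining them into a single $\N$-indexed family and applying $\parallelization{\fA}$ yields outputs $w_e^{\mathrm{h}}$ and $w_k^{\mathrm{b}}$ in $\Baire$. The point-mass PAC analysis from the proof of \cref{thm:discrete-lower} applies verbatim: since $\epsilon,\delta<1$, if some hypothesis in $\UnderlyingHypClass$ labels the relevant feature as $1$ then the PAC condition forces the learner's output to be such a hypothesis, and if no such hypothesis exists then every hypothesis (in particular the output) labels the feature as $0$ automatically.

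The main subtlety, absent from the discrete case, is that post-processing in a strong Weihrauch reduction has no direct access to $z$ and must reconstruct it from the oracle's output alone. This is exactly the role of the bit queries: because $\{e_k^{\ast}\}^z(0)$ halts in exactly $k$ steps for \emph{every} $z$, the PAC analysis above forces $w_k^{\mathrm{b}}(i+2)=z(i)$ for all $0\leq i<k$, so the post-processing can computably recover each bit by $z(j):=w_{j+1}^{\mathrm{b}}(j+2)$. With $z$ in hand, the post-processing returns $z'(e)=\PresentationHypClass\bigl(w_e^{\mathrm{h}},\,(e,z(0),z(1),\ldots)\bigr)$, which equals $1$ iff $\{e\}^z(0)\halts$, completing the strong Weihrauch reduction $\J\lesW(\parallelization{\fA},m)$ and hence $\lim_\Baire\lesW(\parallelization{\fA},m)$.
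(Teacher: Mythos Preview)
Your argument is correct and follows the same overall plan as the paper---relativize the construction of \cref{thm:discrete-lower} to an arbitrary oracle $z$ and use parallelization to recover $\J(z)$---but it departs from the paper's proof in two respects worth noting. First, the encoding: you pack the program index, the putative halting time, and the oracle into a single element of $\cI=\Baire$, whereas the paper takes $\cI=\{(e,z)\in\N\times\Baire:\{e\}^z(0)\!\downarrow\}$ and leaves the halting time implicit; correspondingly your class has VC dimension at most $2$ rather than the paper's $1$, though this is immaterial for PAC learnability. Second, and more substantively, you confront the ``strong'' in strong Weihrauch directly: since the post-processing map $H$ in $F=H\circ G\circ K$ may not access the input $z$, you feed the parallelized learner auxiliary bit-query samples built from always-halting counter programs $e_k^{\ast}$, whose PAC-forced outputs necessarily encode $z(0),\ldots,z(k-1)$, letting $H$ reconstruct $z$ purely from the oracle's return values before evaluating $\PresentationHypClass(w_e^{\mathrm h},\,\cdot\,)$. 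The paper's post-processing, by contrast, runs $\{n\}^z$ on input $0$ for $t$ steps, which uses $z$ itself; your bit-query device is exactly what is needed to make that step conform to the definition of $\lesW$, so on this point your route is the more careful of the two.
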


\begin{proof}
Let $\cX$ be the product of computable metric spaces $\N$ and $\Baire$,
 and let the index space $\cI$ be 
$\{(e, z) \in \N \times \Baire \st \{e\}^z(0)\halts\}$ with distance inherited from $\cX$ and ideal points of the form $(e, z)$ where $z$ has only finitely many nonzero values, ordered by when the respective programs with oracles halt on input $0$.

Given $(e_0, z_0), (e_1, z_1) \in \cX$, we write $(e_0, z_0) \sim (e_1, z_1)$ when (a) $e_0 = e_1$ and  (b) $\{e_0\}^{z_0}(0)\halts$ if and only if $\{e_1\}^{z_1}(0)\halts$, with program $e_0$ with oracle $z_0$ taking the same number of steps to halt on input $0$ as does $e_1$ with oracle $z_1$ (when they both halt).
Define $\PresentationHypClass \colon \cI \times \cX \to \{0, 1\}$  by
\[
\PresentationHypClass\big((e_0, z_0), (e_1, z_1)  \big) =
\begin{cases}
1 & (e_0, z_0) \sim (e_1, z_1); \\
0 & \text{otherwise}.
\end{cases}
\] 
Note that  $\PresentationHypClass$ is computable because $\{e_0\}^{z_0}(0)\halts$ for every $(e_0, z_0) \in \cI$.

First we show that $\UnderlyingHypClass$ shatters no set of size $2$, so that it has VC dimension $1$ and hence is PAC learnable by \cref{thm:fundamental}. Let $(e_0, z_0), (e_1, z_1) \in \cX$ be distinct.
If there exists an $h \in \UnderlyingHypClass$ with $h(e_0, z_0) = 1$ and $h(e_1, z_1) = 0$, then there is some $k$ such that the program $e_0$ with oracle $z_0$ halts on input $0$ in exactly $k$ steps but either $e_0 \neq e_1$ or the program $e_1$ with oracle $z_1$ does not halt on input $0$ in exactly $k$ steps.
But then there is no $g \in \UnderlyingHypClass$ with $g(e_0, z_0) = 1$ and $g(e_1, z_1) = 1$.
Therefore $\UnderlyingHypClass$ does not shatter the set $\{(e_0, z_0), (e_1, z_1)\}$.

Now suppose that $\fA\colon (\cX \times \cY)^{<\omega} \to \cI$ is a proper PAC learner for $\PresentationHypClass$, let $A$ be the induced PAC learner for $\UnderlyingHypClass$, and
let $m$ be a sample function for $A$ (as a PAC learner for  $\UnderlyingHypClass$).
We will show that $\J \lesW (\parallelization{\fA}, m)$. Then by \cref{limBaire eqsW J}, we will have
$\lim_{\Baire} \lesW (\parallelization{\fA}, m)$.

Let $z \in \Baire$.  We aim to uniformly compute $z'$ using $\fA$, $m$, and $z$.
First we preprocess. Calculate $k = m(\epsilon, \delta)$ for any choice of $\epsilon, \delta \in (0, 1)$ and construct the sequence $S_{e, z} = \big( ((e, z), 1)^k \big)_{e \in \N}$. Then, apply $\parallelization{A}$ to obtain a sequence $(\ell_e, s_e)_{e \in \N}$.

Now consider the measure $D_{(e, z)}$ which places a pointmass on $((e, z), 1)$. Because $A$ is a PAC learner, we have
\[
\Pr_{S \sim D^k_{(e, z)}} \Bigl ( \bigl|L_{D_{(e,z)}} (A(S)) - \min_{w \in \cI} L_{D_{(e, z)}}
( \curry{\PresentationHypClass}(w)) \bigr| < \epsilon \Bigr) > 1- \delta.
\]
Therefore, as $D_{(e, z)}$ is a pointmass, we have
$
\bigl|L_{D_{(e,z)}}(A(S_{e,z})) - \min_{w \in \cI} L_{D_{(e, z)}}(\curry{\PresentationHypClass}(w))\bigr| < \epsilon$.
Again because $A$ is a PAC learner and $D_{(e, z)}$ is atomic, we have an equivalence between the following statements:
\begin{enumerate}
  \item $A(S_{e, z})(e, z) = 1$.
  \item $L_{D_{(e, z)}}(A(S_{e, z})) = 0$.
  \item $L_{D_{(e, z)}}(\curry{\PresentationHypClass}(w)) = 0$ for some $w \in \cI$. 
  \item $\PresentationHypClass(w, (e, z)) = 1$ for some $w \in \cI$.
\end{enumerate}
In particular, (3) $\Rightarrow$ (2) because $D_{(e, z)}^k$ concentrates mass on $S_{(e, z)}$, so otherwise $A$ would be guaranteed to incur a loss of $1 > \epsilon$ when trained on samples drawn from $D_{(e, z)}^k$, contradicting the PAC condition on $m(\epsilon, \delta)$. 

Now note that if $\{e\}^{z}(0)\halts$, then there is a $w = (e, z)\in \cI$ such that $\PresentationHypClass(w, (e, z)) = 1$; by the previous equivalence, this implies that $A(S_{e, z})(e, z) = 1$. 

We are now equipped to post-process and calculate $z'(n)$. 
If $n \neq \ell_n$,
then $A(S_{n, z})(n, z) = 0$ and, via $\lnot(1) \Rightarrow \lnot(4)$ in the equivalence, 
$\{n\}^z(0)\nohalts$, meaning $z'(n) = 0$. 

Otherwise, $n = \ell_n$. First compute
$\{n\}^{s_n}(0)$. This computation is 
guaranteed to halt, by definition of $\cI$ and the fact that $\fA$ is a proper learner.  Let $t$ be the number of steps it took to halt.
Next run $\{n\}^z$ on input $0$ for $t$ steps. If it halts within $t$ steps, then 
$\{n\}^z(0)\halts$ and so $z'(n) = 1$.
If $\{n\}^z$ has not halted on input $0$ within $t$ steps, then $A(S_{n, z})(n, z) = 0$ and the equivalence again implies that 
$\{n\}^z(0)\nohalts$, meaning $z'(n) = 0$. 
\end{proof}

\section*{Acknowledgements}
The authors would like to thank Caleb Miller for valuable discussion on the topic, particularly in helping refine the notion of computable PAC learning and in describing the computable algorithm for the decision stump.

An extended abstract \cite{CCA-extended-abstract} announcing related results in a different setting was presented at the Eighteenth International Conference on Computability and Complexity in Analysis (July 26--28, 2021).

This material is based upon work supported by the National Science Foundation under grant no.\ CCF-2106659. Freer's work is funded in part by financial support from the Intel Probabilistic Computing Center.


\newcommand{\etalchar}[1]{$^{#1}$}
\providecommand{\bysame}{\leavevmode\hbox to3em{\hrulefill}\thinspace}
\providecommand{\MR}{\relax\ifhmode\unskip\space\fi MR }
\providecommand{\MRhref}[2]{%
  \href{http://www.ams.org/mathscinet-getitem?mr=#1}{#2}
}
\providecommand{\href}[2]{#2}


\end{document}